\documentclass{article}



\usepackage[preprint]{neurips_2023}



\usepackage[utf8]{inputenc} 
\usepackage[T1]{fontenc}    
\usepackage{hyperref}       
\usepackage{url}            
\usepackage{booktabs}       
\usepackage{amsfonts}       
\usepackage{nicefrac}       
\usepackage{microtype}      
\usepackage{xcolor}         

\usepackage{amsmath}                                  
\usepackage{mathtools}                                
\usepackage{bm}                                       
\usepackage{subcaption}                               
\usepackage{graphicx}                                 
\usepackage[nameinlink,capitalise,noabbrev]{cleveref} 
\usepackage{natbib}                                   
\bibliographystyle{unsrtnat}
\setcitestyle{authoryear, open={(},close={)}}
\usepackage{amsthm}                                   
\usepackage{amssymb}                                  
\newtheorem{proposition}{Proposition}
\newtheorem{definition}{Definition}

\usepackage{gensymb}                                  
\usepackage{wrapfig}                                  
\usepackage{enumitem}                                 
\usepackage{pgfplots}
\pgfplotsset{compat=1.17}
\usepackage{xcolor}
\definecolor{light_gray}{HTML}{f0f0f0}
\definecolor{mid_gray}{HTML}{d9d9d9}
\definecolor{light_blue}{HTML}{306EFF}
\definecolor{light_orange}{HTML}{F87217}

\definecolor{neur_purpl}{HTML}{DCEBF4}
\definecolor{outt_purpl}{HTML}{ef3b2c}
\definecolor{neur_green}{HTML}{F8CFBD}

\title{Learning Linear Groups in Neural Networks}

%

\author{%
  Emmanouil Theodosis\\
  Harvard University\\
  \texttt{etheodosis@seas.harvard.edu}
  \And
  Karim Helwani\\
  Amazon Web Services\\
  \texttt{helwk@amazon.com}
  \AND
  Demba Ba\\
  Harvard University\\
  \texttt{demba@seas.harvard.edu}
}

\begin{document}

\maketitle

\begin{abstract}
  Employing equivariance in neural networks leads to greater parameter efficiency and improved generalization performance through the encoding of domain knowledge in the architecture; however, the majority of existing approaches require an a priori specification of the desired symmetries. We present a neural network architecture, Linear Group Networks (LGNs), for learning linear groups acting on the weight space of neural networks. Linear groups are desirable due to their inherent interpretability, as they can be represented as finite matrices. LGNs learn groups without any supervision or knowledge of the hidden symmetries in the data and the groups can be mapped to well known operations in machine learning. We use LGNs to learn groups on multiple datasets while considering different downstream tasks; we demonstrate that the linear group structure depends on \emph{both} the data distribution and the considered task.
\end{abstract}

\section{Introduction}
\label{sec:intro}
Convolutional neural networks \citep{LBD+89,LBBH98} were one of the first architectures that introduced the concept of equivariance to neural networks. By exploiting translation symmetry, networks can greatly reduce their number of learnable parameters compared to fully connected networks, while at the same time resulting in models that generalize better. While equivariance to translations is a natural choice for image classification or speech recognition, it is not the only one; planar rotations of objects leave the class identity unchanged and variations in pitch do not alter the spoken utterance. Moreover, certain data modalities, such as graphs, do not utilize translational symmetries and instead require other symmetries.

Equivariant convolutional networks \citep{CoWe16} were proposed as a method to design equivariant neural networks. However, encoding symmetries poses certain challenges: most architectures require an explicit specification of the desired symmetry, lack a general framework and require special treatment for every group, and only allow for \emph{unidirectional} knowledge flow. Indeed, baseline frameworks do not allow learning the symmetries from the data and instead they need to be specified during construction. At the same time, they are not generalizable as the network architecture needs to change fundamentally in order to account for the different groups, depending on the desired symmetry. Finally, it is possible to embed known symmetries in the model (creating a knowledge path from the designer to the network), but it is not possible to uncover symmetries in the data and extract them. This step is crucial as in many modalities we lack a clear understanding of the present symmetries.

Ideally, the equivariant structure would be learnt from the data, for the specific task. At the center front of this equivariant structure is the operation that characterizes the group: its group action. However, isomorphisms between groups can inhibit interpretability, as two groups can have the same structure but their actions transform the data in different manners. In that context, recovering a group structure that is isomorphic to a known group, for example $\mathcal{D}_4$ offers little insight without knowing how the elements operate in the data domain. We would like to recover the specific group that is interesting for the data and the task. The \emph{linear group}, the group of invertible transformations, is an extensive group that encapsulates a vast variety of groups as subgroups, including rotations, reflections, translations, and more. Under this group, group actions are clearly understood and can be visualized giving insights on how the matrices interact with the signals from the data domain.

We propose Linear Group Networks (LGNs) that learn elements of the linear group and uses them to construct cyclic groups on the weight space of neural networks. Our main contributions can be summarized as follows:
\begin{itemize}
  \item We propose a computational framework that learns elements of the \emph{general linear group} $\operatorname{GL}_d(K)$ and use them to construct sets of filters that belong to a (finite) cyclic group whose action is a linear operator. The framework is constructive and requires minimal changes to existing architectures, training procedures, and pipelines.
  \item We apply the framework on datasets of natural images and recover the group structure of the filter sets. We discover multiple structures of interest, including groups whose actions are \emph{skew-symmetric}, \emph{Toeplitz}, or act on \emph{multiple scales}.
  \item We analyze the learned actions via ablation studies and draw connections to well-known operations in machine learning. We show that certain sets of groups have actions that have a high correlation with compositions of rotations and median filtering (related to the popular pooling operations in neural networks).
\end{itemize}

Finally, we emphasize that a main contribution of our work is analyzing the learned group actions for the different filter sets when the architecture is trained on natural images, and drawing interepretable analogues to well-known operations in machine learning. Prior work focused on the ability of their networks to learn carefully curated (and well-studied) known symmetries; instead, we analyze the group structures that organically arise in ubiquitous datasets as an effort to inform our understanding of important symmetries for data modalities and different tasks, rather than provide an architecture purely for symmetry learning.

\section{Related work}
\label{sec:related}
In recent years there has been a resurgence in interest for equivariant neural networks following the concurrent works of \citet{CoWe16} and \cite{DDK16}, where convolutional frameworks that were equivariant with respect to elementary rotations and reflections were introduced. The work of \cite{KoTr18} showed that linear equivariant maps are intertwined with group convolutions and inspired a vast array of practical architectures for encoding equivariances, including architectures equivariant to arbitrary rotations \citep{WGTB17}; steerable representations \citep{CoWe17}; avoiding interpolation artifacts by modeling equivariances on the sphere \citep{EMD20}; and extensions to vector fields \citep{MVKT17}. 

A growing body of work has studied the process of learning symmetries: \citet{CoWe14,DWL+21,MSSS+22} consider learning Lie groups by utilizing the corresponding Lie algebras; \citet{BFIG20} learn distributions over data augmentations to recover invariances from the data; \citet{RoLo22} consider approximate symmetries over exact ones, as exact symmetries might be restrictive for natural data; and \citet{ZKF21} propose a meta-learning scheme for learning equivariances by reparametrizing the weight matrices. Our work is most closely related to that of \citet{ZKF21}, however our method doesn't rely on a meta-learning framework and learns the filter sets and the corresponding group actions at the same time. Most importantly, we address one of the main limitations of \citet{ZKF21}, where symmetry discovery was not possible in single-task settings. Recently, \cite{SSOH23} proposed the use of the bispectrum to learn groups and their orbits. However, their framework acts directly on the \emph{input} space, whereas ours acts on the \emph{weight} space.

\section{Preliminaries}
\label{sec:prelim}
In this section we introduce the necessary background for our method, which relies on equivariance, cyclical and linear groups, and unfolded networks. Unfolded architectures have been used in the literature for their parameter efficiency and principled derivation from optimization problems. Moreover, they have been used with great success for compressive sensing \citep{GrLe10}, state-of-the-art denoising \citep{TDB21}, downstream tasks and rank-minimization \citep{RoLe13,JZL20}, and for deep representations \citep{SABE20}.  We note that our method does not rely on unrolling and is compatible with any network architecture as it acts directly on the weights of each layer. For the bulk of our experiments in \cref{sec:under}, we opted for unfolded networks because of the residual estimation at every layer: the input is approximately reconstructed at every layer to compute the next representation. As our work considers group actions as the centerpiece of group learning, having filters that act on the data space allows us to learn human-interpretable group actions, which becomes challenging when the group acts on a space of high-dimensional feature maps. At the same time, acting on the data space allows us to maintain moderate model sizes, making our presented networks efficient and scalable.

\textbf{Equivariance.} Consider an operator $f: V \to W$ and a family of actions $\mathcal{T}$. We call $f$ \emph{equivariant} with respect to the family $\mathcal{T}$ if for $T \in \mathcal{T}$ and any $x\in V$ it holds
\begin{equation}
  f(T(x)) = T'(f(x)),
\end{equation}
for some transformation $T' \in \mathcal{T}'$. Note that in general the action $T$ and transformation $T'$ are not the same; this can be directly seen since $\operatorname{dom}(T) = V$ and $\operatorname{dom}(T') = W$ (however, even when $\operatorname{dom}(T) = \operatorname{dom}(T')$ the operations need not be the same).

\textbf{Cyclic groups.} Consider a non-trivial set $G$ and an operation $*$. We call $(G, *)$ a \emph{group} if $*$ is associative on $G$, $G$ contains an identity element $e$ with respect to $*$, and for any $g \in G$ there exists an inverse element $g^{-1}\in G$ such that $g * g^{-1} = e$.

We call the group a cyclic group if every element in the group can be generated via consecutive applications of a basis element $g$ (i.e., any element of $G$ can be expressed as $g^k$ for $k\in \mathbb{Z}$), and we call $g$ the \emph{generator} of $G$. When the cyclic group is finite and has order $p$ it can be denoted as
\begin{equation}
  G = \{e, g, g^2, \ldots, g^{p-1}\},
\end{equation}
where higher order exponents get mapped to the $p$ canonical elements, i.e. for $l \geq p$ it holds $g^l = g^{l \mod p}$.

\textbf{Linear groups.} For our purposes, linear groups will refer to subgroups of the \emph{general linear group} $\operatorname{GL}_d(K)$ of $d\times d$ invertible matrices over the field $K$. Considering a collection of $C$ of elements of $\operatorname{GL}_d(K)$, the subgroup generated by $C$ is a linear group. A special case of interest for our work arises when $C$ contains only a single element $c$; then $C$ becomes a cyclic group generated by $c$.

\textbf{Unrolled sparse autoencoders.} Unrolled networks temporally unroll the steps of optimization algorithms, mapping algorithm iterations to network layers. In that way, the output of the neural network can be interpreted as the output of the optimization algorithm, with theoretical guarantees under certain assumptions. The \emph{Iterative Soft Thresholding Algorithn} (ISTA), an algorithm for sparse coding, has inspired several architectures \citep{GrLe10, SiEl19, SABE20, TDB21}, due to the desirability of sparse representations for interpretation purposes and also the connection between ReLU and soft thresholding. Within that framework, the representation at layer $l + 1$ is given by
\begin{equation}
    \label{eq:ista}
    \bm{z}^{(l+1)} = \mathcal{S}_{\lambda}\left(\bm{z}^{(l)} + \alpha\bm{W}_l^T(\bm{x} - \bm{W}_l\bm{z}^{(l)})\right),
\end{equation}
where $\bm{x}$ is the \emph{original} input, $\bm{z}^{(l)}$ is the representation at the previous layer, $\bm{W}_{l}$ are the weights of layer $l$, $\alpha$ is a constant such that $\frac{1}{\alpha} \geq \sigma_{\max}(\bm{W}_l^T\bm{W}_l)$, and $\mathcal{S}_{\lambda}$ is the \emph{soft thresholding} operator defined as
\begin{equation}
  \label{eq:soft_thresh}
    \mathcal{S}_{\lambda}(u) = \operatorname{sign}(u)\cdot\operatorname{ReLU}(\lvert u\rvert - \lambda).
\end{equation}
A one-sided version of \eqref{eq:soft_thresh} arises if we enforce $u > 0$. Then, the soft thresholding operator becomes a shifted version of ReLU, i.e. $\mathcal{S}_{\lambda}(u) = \operatorname{ReLU}(u - \lambda)$ (and the bias $\lambda$ can be incorporated to the weights of the network prior the activation). If the weights of all the $L$ layers are equal, i.e. $W_1 = \ldots = W_L$, we call the network \emph{tied}. As a final remark, \cref{eq:ista} can be rewritten as
\begin{equation*}
    \bm{z}^{(l+1)} = \mathcal{S}_{\lambda}\left((I - \alpha\bm{W}_l^T\bm{W}_l)\bm{z}^{(l)} + \alpha\bm{W}_l^T\bm{x}\right) = \mathcal{S}_{\lambda}\left(\bm{W}_z \bm{z}^{(l)} + \bm{W}_x \bm{x}\right),
\end{equation*}
where we let $\bm{W}_z = (I - \alpha\bm{W}_l^T\bm{W}_l)$ and $\bm{W}_x = \alpha\bm{W}_l^T$, which can be interpreted as a residual network \citep{HZRS16}, with a residual connection to the input. Convolutional extensions of \eqref{eq:ista} can be readily derived by replacing the matrix-vector products with correlation/convolution operations.

\section{Group learning framework}
\label{sec:framework}
We propose a framework for learning linear groups acting on the filters of neural networks. We consider cyclic linear groups generated by a single element of the generalized linear group; to that end, we will allow $K$ groups of size $p$ at every layer $l$ (all of which are design choices of the model). This implies that at every layer the architecture learns $K$ filter sets of exactly $p$ elements each, such that the filter sets are generated via a generating element $g_{(k,l)}$. Then, the weights of each group are related via the application of the group action
\begin{equation}
  [\bm{W}_l^k \quad g_{(k,l)} \bm{W}_l^k \quad \ldots g_{(k,l)}^{p-1} \bm{W}_l^k].
\end{equation}
Let $\bm{W}_l^k\in\mathbb{R}^{n\times m}$, i.e., the weights of every group at every layer are real-valued matrices of size $n\times m$. An initial approach would be to model $g_{(k,l)}$ as an element of $\operatorname{GL}_n(\mathbb{R})$, however we will see that this has significant limitations.

\textbf{Limitations of matrices in the weight space.} Consider using matrices $\bm{A}\in \mathbb{R}^{n \times n}$ to represent the generators $g_{(k,l)}$ of each group at every layer. Then, assuming a \emph{basis} filter $\bm{W}_l^k\in\mathbb{R}^{n\times m}$ for each filter set, generate the rest of the filters by applying the group action on the basis filter of each group
\begin{equation*}
  [\bm{W}_l^k \quad \bm{A}_{(k, l)} \bm{W}_l^k \quad \ldots \bm{A}_{(k, l)}^{p-1} \bm{W}_l^k].
\end{equation*}
This is a natural model since, if $\operatorname{rank}(A) = n$ (assuming $m \geq n$ for now), this linear operator can generate any vector in $\bm{x} \in \mathbb{R}^n$, as it is a basis for $\mathbb{R}^n$. However, while such an approach would be able to learn generators that can produce any vector in $\mathbb{R}^n$, it would ignore any spatial relations in the basis filters (or, more generally, any co-dependence or correlation between different columns). Indeed, matrices (and by extension, convolutional neural network filters) cannot be used to represent \emph{any} linear operator on their own set, as we prove by providing a short counter-example below.
\begin{proposition}
  There are linear operators on the space of $n \times m$ matrices that can't be represented via an $n \times n$ matrix.
\end{proposition}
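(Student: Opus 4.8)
The cleanest route is a dimension count. View $\mathbb{R}^{n\times m}$ as a vector space of dimension $nm$, so the space $\mathcal{L}(\mathbb{R}^{n\times m})$ of all linear operators on it has dimension $(nm)^2 = n^2m^2$. The operators that ``can be represented via an $n\times n$ matrix'' are exactly those of the form $\Phi(\bm A)\colon \bm W \mapsto \bm A\bm W$ for some $\bm A \in \mathbb{R}^{n\times n}$. First I would check that the assignment $\bm A \mapsto \Phi(\bm A)$ is linear and injective: if $\bm A\bm W = \bm 0$ for every $\bm W$, then taking $\bm W = \bm e_i \bm e_1^{\mathsf T}$ for each $i$ shows every column of $\bm A$ vanishes, so $\bm A = \bm 0$. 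Hence the image of $\Phi$ is a subspace of $\mathcal{L}(\mathbb{R}^{n\times m})$ of dimension exactly $n^2$. Since $n^2 < n^2m^2$ whenever $m \geq 2$, this image is a proper subspace, so some linear operator on $\mathbb{R}^{n\times m}$ lies outside it. (The statement is of course vacuous for $m=1$, where $\mathbb{R}^{n\times 1} \cong \mathbb{R}^n$ and every operator is a matrix; I would state the result for $m\geq 2$, which is the regime relevant to filter sets.)

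As the excerpt advertises ``a short counter-example,'' I would also exhibit one explicitly, which makes the obstruction concrete: let $\bm B \in \mathbb{R}^{m\times m}$ be any matrix that is not a scalar multiple of the identity, and consider $\Psi\colon \bm W \mapsto \bm W\bm B$, which is manifestly linear on $\mathbb{R}^{n\times m}$. To see $\Psi \neq \Phi(\bm A)$ for every $\bm A$, suppose $\bm A\bm W = \bm W\bm B$ for all $\bm W$. Testing on the rank-one matrices $\bm W = \bm e_i\bm e_j^{\mathsf T}$ gives $(\bm A\bm e_i)\bm e_j^{\mathsf T} = \bm e_i(\bm B^{\mathsf T}\bm e_j)^{\mathsf T}$; fixing $j$ and varying $i$ forces $\bm A\bm e_i \in \operatorname{span}(\bm e_i)$, so $\bm A$ is diagonal, and then matching entries forces all diagonal entries equal and $\bm B$ to be that same scalar times $\bm I$, contradicting the choice of $\bm B$. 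Intuitively, left-multiplication acts ``within each column independently and identically,'' so it cannot realize any operation that mixes or treats columns differently — which is precisely the spatial co-dependence between filter columns that the surrounding discussion wants to retain.

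There is essentially no hard step here; the proposition is a soft fact. The only things to be careful about are (i) the injectivity of $\Phi$, so that the dimension count is tight rather than merely an inequality, and (ii) the degenerate case $m=1$, which should be excluded or noted. If one prefers the explicit counterexample as the primary argument, the only mild bookkeeping is the index chase with the rank-one test matrices $\bm e_i\bm e_j^{\mathsf T}$, which pins down $\bm A$ completely. I would present the dimension-counting argument as the main proof and append the right-multiplication example as the promised explicit witness.
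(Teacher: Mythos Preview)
Your proof is correct and slightly more systematic than the paper's, but the underlying idea is the same. The paper does not do the dimension count at all; it goes straight to a single concrete counterexample: the linear operator that swaps the $(1,1)$ and $(n,m)$ entries of a matrix and leaves all other entries fixed. It then notes that any operator of the form $\bm W \mapsto \bm A\bm W$ satisfies $(f(\bm W))_{ij} = \sum_k A_{ik}W_{kj}$, so the $j$-th output column depends only on the $j$-th input column, which precludes swapping entries across columns $1$ and $m$. (The paper also remarks that transposition $\bm W \mapsto \bm W^{\mathsf T}$ would be an even simpler example, but avoids it because domain and codomain differ when $n\neq m$.) Your right-multiplication example $\bm W \mapsto \bm W\bm B$ with $\bm B$ non-scalar is a perfectly good alternative witness and arguably cleaner to verify via the rank-one test matrices; the dimension count $n^2 < n^2m^2$ for $m\geq 2$ gives a stronger statement (almost every linear operator fails) that the paper does not make explicit. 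Both routes ultimately rest on the same observation you articulate at the end: left multiplication by an $n\times n$ matrix treats the columns of $\bm W$ independently and identically, so it cannot mix columns.
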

\begin{proof}
  Assume that every linear operator on $n \times m$ matrices can be represented by a $n \times n$ matrix and consider an operator $f: V \to V$ (with $V \equiv \mathbb{R}^{n\times m}$) that swaps the top left with the bottom right element, i.e.
  \begin{equation*}
    \left(f(\bm{W})\right)_{ij} = \begin{cases}
      W_{nm}, \quad \text{ if $(i, j) = (1, 1)$},\\
      W_{11}, \quad \text{ if $(i, j) = (n, m)$},\\
      W_{ij}, \quad \text{ otherwise}.
    \end{cases}
  \end{equation*}
  This operator is linear since $f(\alpha \bm{X} + \beta \bm{Y}) = \alpha f(\bm{X}) + \beta f(\bm{Y})$. However, this operator can't be represented by a $n \times n$ matrix since these matrices, as linear operators, act on $n$-dimensional vectors (i.e., treat every column of the input independently). This follows since, assuming $f$ is parametrized by $\bm{A}\in\mathbb{R}^{n\times n}$, we have
  \begin{equation*}
    \left(f(\bm{W})\right)_{ij} = \sum_{k} A_{ik}W_{kj}.
  \end{equation*}
  In the above equation the operator has a strict dependence on column $j$ of the input and the computation is independent of all other columns; therefore, swapping the corner most elements columns $1$ and $m$ is not feasible using this parametrization.

  Note that a simpler example would be an operator such that $\bm{W} \mapsto \bm{W}^T$, as transposition is a linear map. However, if $n \neq m$ then the group elements would belong in different linear spaces ($\mathbb{R}^{n\times m}$ versus $\mathbb{R}^{m\times n}$). While this is not prohibitive, it would complicate the notation for the group definition and hence we presented a slightly more involved example.
\end{proof}

\textbf{Employing vectorization.} As we showed above, modeling the group actions $g_{(k,l)}$ as $n\times m$ matrices leads to restrictive groups where simple linear operators are excluded. Instead, we propose the modeling of the group actions $g_{(k,l)}$ via vectorization. We first define the vectorization operator along with its inverse.
\begin{definition}
  \label{def:vec}
  Consider a matrix $\bm{A} \in \mathbb{R}^{n\times m}$. Define the vectorization operator, denoted as $\operatorname{vec}$, as follows
  \begin{align}
    \label{eq:vec}
    \begin{split}
      \operatorname{vec}: \quad& \mathbb{R}^{n\times m} \to \mathbb{R}^{n\cdot m}\\
      & \bm{A} \mapsto [A_{11}, \ldots, A_{n1}, A_{12}, \ldots, A_{n2},\ldots, A_{1m}, \ldots, A_{nm}]^T.
    \end{split}
  \end{align}
  The vectorization operator can also be expressed as a linear sum using the \emph{Kronecker product}: $\operatorname{vec}(\bm{A}) = \sum_{i = 1}^m \bm{e}_i \otimes \bm{A}\bm{e}_i$, where $\bm{e}_i\in\mathbb{R}^m$ denotes the $i$-th basis vector of $\mathbb{R}^m$. The inverse map $\operatorname{vec}^{-1}_{n\times m}$, where the subscript $n\times m$ will be dropped for conciseness, is also defined via Kronecker products
  \begin{align}
    \label{eq:vec_inv}
    \begin{split}
      \operatorname{vec}^{-1}: \quad& \mathbb{R}^{n \cdot m} \to \mathbb{R}^{n\times m}\\
      & \bm{a} \mapsto \left(\operatorname{vec}^T(\bm{I}_m)\otimes \bm{I}_n\right)(\bm{I}_m\otimes\bm{a}),
    \end{split}
  \end{align}
  where $\bm{I}_n \in \mathbb{R}^{n\times n}$ denotes the identity matrix of $\mathbb{R}^n$.
\end{definition}
Using those definitions, we will parametrize each group action of every layer as a linear operator on vectors in $\mathbb{R}^{n\cdot m}$. Any linear operator on vectors can be uniquely parametrized (up to similarity) via a matrix $\bm{A} \in \mathbb{R}^{n\cdot m \times n\cdot m}$; then, the group action acting on each filter set $k$ at every layer $l$,  $g_{(k,l)}$, will be instantiated via the map
\begin{align}
  \label{eq:group}
  \begin{split}
    \phi_{\bm{A}}: \quad& \mathbb{R}^{n\times m} \to \mathbb{R}^{n\times m}\\
    & \bm{X} \mapsto \operatorname{vec}^{-1}(\bm{A}\operatorname{vec}(\bm{X})).
  \end{split}
\end{align}
Using the formulation of \eqref{eq:group}, and denoting consecutive compositions with the same function via $f^n = f \circ f^{n-1}$ with $f \circ f = f^2$, we can finally express the weights of each filter set as a function of the group action $\bm{A}_{(k, l)}\in \mathbb{R}^{n\cdot m \times n \cdot m}$
\begin{equation}
  \label{eq:group_layer}
  \bm{W}_{l_k} = [\bm{W}_l^k \quad \phi_{\bm{A}_{(k, l)}}(\bm{W}_l^k) \quad \ldots \phi_{\bm{A}_{(k, l)}}^{p-1}(\bm{W}_l^k)].
\end{equation}
While simple in its inception, the operator of \eqref{eq:group} is expressive and can model \emph{all} linear operators in its domain. Note that $\operatorname{dom}(\phi_{\bm{A}_{(k, l)}}) = \mathbb{R}^{n\times m}$ which is different from the domain of $\bm{A}_{(k, l)}$ ($=\mathbb{R}^{n\cdot m \times n \cdot m}$) when viewed as an operator (which is trivially linear with respect to its domain).
\begin{proposition}[Linearity]
  Let $\phi_{\bm{A}_{(k, l)}}$ be the map defined in \eqref{eq:group}; $\phi_{\bm{A}_{(k, l)}}$ is \emph{linear}. Moreover, \emph{any} linear map $\phi: \mathbb{R}^{n\times m} \to \mathbb{R}^{n\times m}$ can be parametrized by $\phi_{\bm{A}_{(k, l)}}$.
\end{proposition}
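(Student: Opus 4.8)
The plan is to use the fact that $\operatorname{vec}$ is a linear \emph{isomorphism} between $\mathbb{R}^{n\times m}$ and $\mathbb{R}^{n\cdot m}$, so that $\phi_{\bm{A}_{(k,l)}}$ is a composition of three linear maps, and, conversely, every linear map on $\mathbb{R}^{n\times m}$ is obtained by conjugating a matrix through $\operatorname{vec}$.

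First I would record that $\operatorname{vec}$ is linear: by \eqref{eq:vec} every coordinate of $\operatorname{vec}(\bm{X})$ is a single entry of $\bm{X}$, so $\operatorname{vec}(\alpha\bm{X}+\beta\bm{Y}) = \alpha\operatorname{vec}(\bm{X}) + \beta\operatorname{vec}(\bm{Y})$ (equivalently, this is immediate from the Kronecker-sum expression $\operatorname{vec}(\bm{X}) = \sum_i \bm{e}_i\otimes\bm{X}\bm{e}_i$). Since $\operatorname{vec}$ is injective (distinct matrices have distinct entry lists) and maps between spaces of the same finite dimension $n\cdot m$, it is a bijection, and the reshaping map $\operatorname{vec}^{-1}$ of \eqref{eq:vec_inv} is its two-sided inverse; being the inverse of a linear bijection it is itself linear. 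Linearity of $\phi_{\bm{A}_{(k,l)}}$ then follows because $\bm{X}\mapsto \bm{A}_{(k,l)}\operatorname{vec}(\bm{X})$ is a composition of the linear maps $\operatorname{vec}$ and left multiplication by $\bm{A}_{(k,l)}$, and applying $\operatorname{vec}^{-1}$ to the result preserves linearity; explicitly, $\phi_{\bm{A}_{(k,l)}}(\alpha\bm{X}+\beta\bm{Y}) = \operatorname{vec}^{-1}\big(\bm{A}_{(k,l)}(\alpha\operatorname{vec}(\bm{X})+\beta\operatorname{vec}(\bm{Y}))\big) = \alpha\phi_{\bm{A}_{(k,l)}}(\bm{X}) + \beta\phi_{\bm{A}_{(k,l)}}(\bm{Y})$.

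For the universality claim, given an arbitrary linear map $\phi:\mathbb{R}^{n\times m}\to\mathbb{R}^{n\times m}$ I would form the conjugated map $\psi \coloneqq \operatorname{vec}\circ\phi\circ\operatorname{vec}^{-1}:\mathbb{R}^{n\cdot m}\to\mathbb{R}^{n\cdot m}$, which is linear as a composition of linear maps. Every linear endomorphism of $\mathbb{R}^{n\cdot m}$ is represented by a matrix, so there exists $\bm{A}\in\mathbb{R}^{n\cdot m\times n\cdot m}$ — with $i$-th column $\psi(\bm{e}_i)$ for the standard basis $\{\bm{e}_i\}$ of $\mathbb{R}^{n\cdot m}$ — such that $\psi(\bm{v}) = \bm{A}\bm{v}$ for all $\bm{v}$. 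Taking this $\bm{A}$, we get $\phi_{\bm{A}} = \operatorname{vec}^{-1}\circ\psi\circ\operatorname{vec} = \operatorname{vec}^{-1}\circ\operatorname{vec}\circ\phi\circ\operatorname{vec}^{-1}\circ\operatorname{vec} = \phi$, using that $\operatorname{vec}^{-1}\circ\operatorname{vec}$ is the identity on $\mathbb{R}^{n\times m}$. Hence $\phi$ is parametrized by $\phi_{\bm{A}}$, as claimed; concretely one may read off $\bm{A}$ column-by-column as $\bm{A}\bm{e}_i = \operatorname{vec}\big(\phi(\operatorname{vec}^{-1}(\bm{e}_i))\big)$, i.e. apply $\phi$ to the matrix units and vectorize.

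There is no substantive obstacle here; the one point to be careful about is establishing that $\operatorname{vec}$ is genuinely a linear \emph{isomorphism} with linear inverse, not merely a linear map, since the entire argument is transport of structure along that isomorphism. As a consistency check on expressivity, $\dim\mathbb{R}^{n\times m} = n\cdot m$ and the space of linear endomorphisms of it has dimension $(n\cdot m)^2 = \dim\mathbb{R}^{n\cdot m\times n\cdot m}$, so no linear operator can be missed, matching the \emph{all linear operators} assertion.
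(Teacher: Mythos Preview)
Your proof is correct and follows essentially the same approach as the paper: both argue that $\phi_{\bm{A}_{(k,l)}}$ is linear as a composition of the linear maps $\operatorname{vec}$, matrix multiplication, and $\operatorname{vec}^{-1}$, and both obtain universality by using that $\operatorname{vec}$ is a linear isomorphism so that parametrizing linear maps on $\mathbb{R}^{n\times m}$ reduces to parametrizing linear maps on $\mathbb{R}^{n\cdot m}$, which are exactly the $(n\cdot m)\times(n\cdot m)$ matrices. Your version is somewhat more explicit---you actually write down the conjugation $\psi=\operatorname{vec}\circ\phi\circ\operatorname{vec}^{-1}$ and read off $\bm{A}$ column-by-column, and you add the dimension count $(n\cdot m)^2$ as a sanity check---but the underlying argument is identical.
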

\begin{proof}
  $\phi_{\bm{A}_{(k, l)}}$ is a composition of linear maps and therefore is itself linear. Indeed, $\phi_{\bm{A}_{(k, l)}}$ comprises the composition of $\operatorname{vec}$, matrix multiplication, and $\operatorname{vec}^{-1}$. Following \eqref{eq:vec} and \eqref{eq:vec_inv} in \cref{def:vec}, both operators are linear as compositions of linear operations, and therefore $\phi_{\bm{A}_{(k, l)}}$ is itself linear.

  For the second part, consider the vector spaces $\mathbb{R}^{n\times m}$ and $\mathbb{R}^{n\cdot m}$. The linear map $\operatorname{vec}$ is an isomorphism from $\mathbb{R}^{n\times m}$ to $\mathbb{R}^{n\cdot m}$ since
  \begin{itemize}
    \item $\operatorname{vec}(\alpha \bm{W}) = \alpha \operatorname{vec}(\bm{W})$, for any $\alpha\in\mathbb{R}, \bm{W} \in \mathbb{R}^{n\times m}$, and
    \item $\operatorname{vec}(\bm{W} + \bm{V}) = \operatorname{vec}(\bm{W}) + \operatorname{vec}(\bm{V})$, for any $\bm{W}, \bm{V} \in \mathbb{R}^{n\times m}$,
  \end{itemize}
  with the inverse map $\operatorname{vec}^{-1}$. Since the spaces are isomorphic, parametrizing the a linear map in $\mathbb{R}^{n\times m}$ reduces to parametrizing a linear map in $\mathbb{R}^{n\cdot m}$. However, all linear transformations in $\mathbb{R}^{n\cdot m}$ can be expressed by matrices $\bm{A}\in\mathbb{R}^{n\cdot m \times n\cdot m}$\footnote{For a proof of this classical result in linear algebra, see \cref{appendix:linalg}.}, which is precisely the parametrization of $\phi_{\bm{A}_{(k, l)}}$. Therefore, any linear map $\phi$ can be parametrized by  $\phi_{\bm{A}_{(k, l)}}$.
\end{proof}

\textbf{Architecture.} Composing the contents of this section, we apply our method to an unfolded network. Our building block consists of a \emph{cyclical group layer}, a convolutional layer which utilizes unfolding
\begin{equation}
  \bm{z}^{(l+1)} = \mathcal{S}_{\lambda}\left(\bm{z}^{(l)} + \alpha\bm{W}_l^T*(\bm{x} - \bm{W}_l\bm * {z}^{(l)})\right),
\end{equation}
where $*$ denotes convolution (correlation) and the weights of each layer $l$ have $K$ groups of filter sets such that
\begin{equation}
  \label{eq:layer_weights}
  \bm{W}_l = [\bm{W}_{l_1} \quad \bm{W}_{l_2} \quad \ldots \quad \bm{W}_{l_K}],
\end{equation}
with $\bm{W}_{l_k}$ being defined by \eqref{eq:group_layer}.

\subsection{Invertibility loss}
\label{subsec:invert}
To train the architecture, we consider the loss function best applicable to the downstream task of interest, which would result in the simultaneous, unsupervised learning of the basis filters $\bm{W}_{l}^k$ and the group action $\bm{A}_{(k, l)}$ via backpropagation. However, without any regularization, $\bm{A}_{(k, l)}$ are unlikely to be elements of the linear group $\operatorname{GL}_{n\cdot m}(\mathbb{R})$. The group membership is defining in our work, as otherwise $\bm{A}_{(k,l)}$ do not define a cyclic group. To that end, we introduce an \emph{invertibility loss}. This loss encourages the elements of the cyclic groups to be invertible and thus are elements of $\operatorname{GL}_{n\cdot m}(\mathbb{R})$:
\begin{equation}
  L = \mu \lVert \bm{A}_{(k, l)} \widetilde{\bm{A}}_{(k, l)} - \bm{I}\rVert_F,
\end{equation}
where $\widetilde{\bm{A}}_{(k, l)}$ are matrices only used in training to encourage invertibility and $\mu$ is the a regularization parameter controlling the tradeoff between the performance on the downstream task and the enforcement of the invertibility. More discussions about the invertibility loss, and alternatives, can be found in \cref{appendix:loss}.

\section{Experiments}
\label{sec:under}
We used the architecture introduced in \cref{sec:framework} in order to learn filter sets governed by different group actions in order to uncover latent symmetries in natural datasets. For our experimental setting, we learned basis filters $\bm{W}_l^k \in\mathbb{R}^{6 \times 6}$. Our architecture consists of $L = 4$ layers, each having $K = 5$ cyclic groups of $p = 4$ elements each. For complete information about hyperparameter values, datasets, training procedures, and the architecture see \cref{appendix:architecture}.

\subsection{Recovered group structures}
\label{subsec:recov}
We trained an unfolded network using our method introduced in \cref{sec:framework} on the \texttt{CIFAR10} dataset for the task of classification and learned the group actions $\bm{A}_{(k, l)}$ and the basis filters $\bm{W}_l^k$. During our experiments our networks learned multiple interesting group actions; we note three main structures that are of interest and we present them in \cref{fig:group_actions}.

\begin{figure}[b]
  \centering
  \begin{subfigure}[h]{0.32\textwidth}
    \centering
    \includegraphics[width=\textwidth]{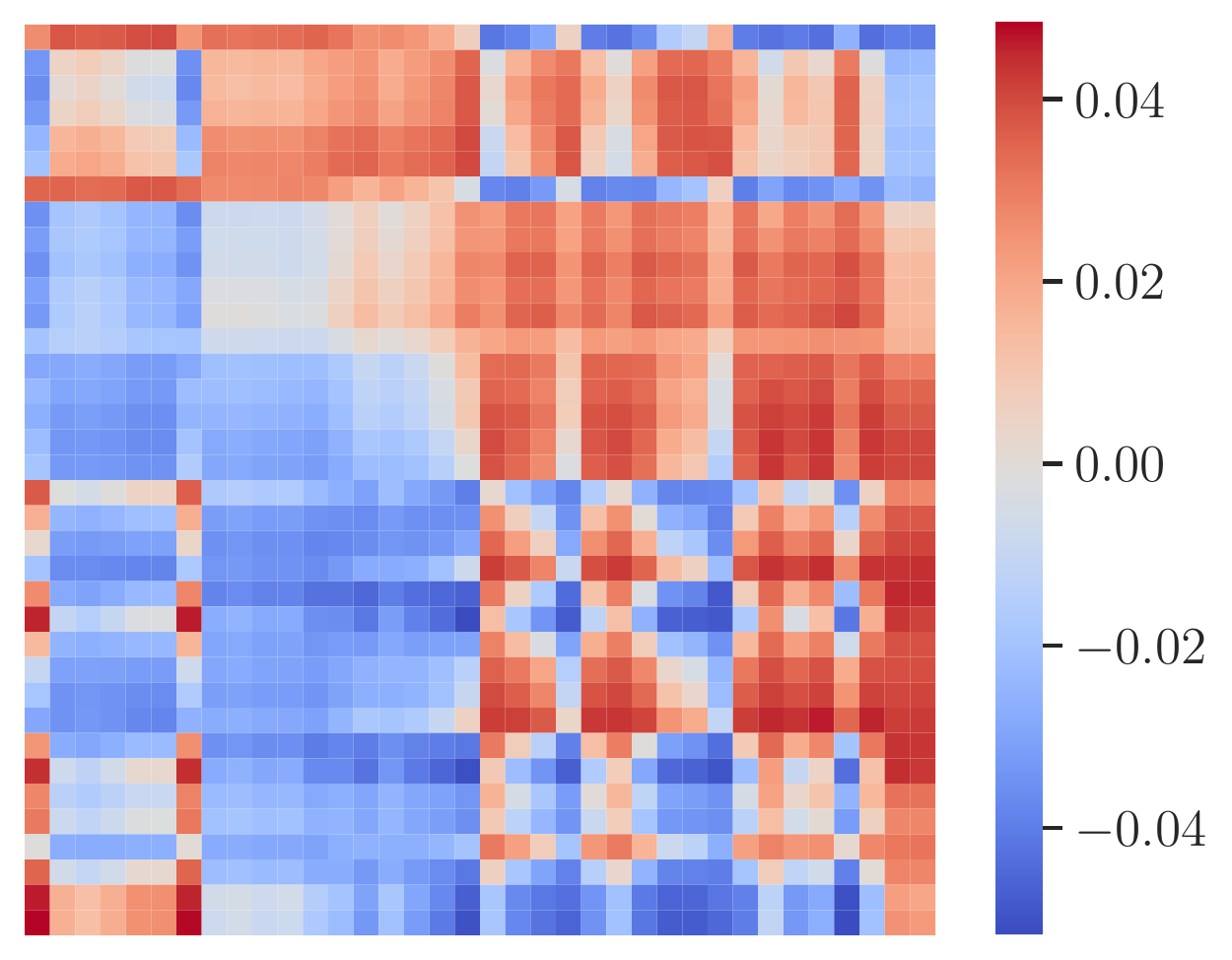}
  \end{subfigure}
  \begin{subfigure}[h]{0.32\textwidth}
    \centering
    \includegraphics[width=\textwidth]{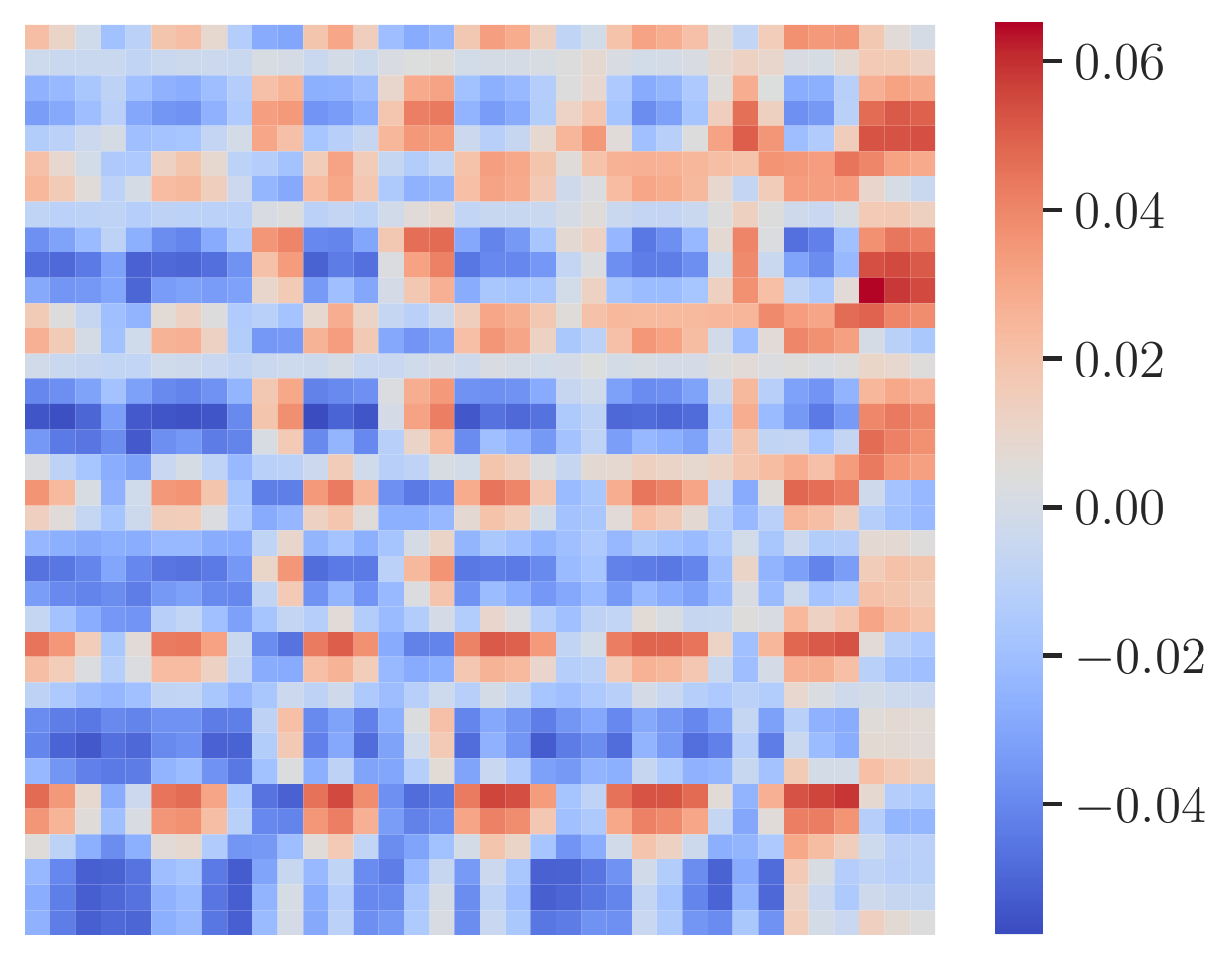}
  \end{subfigure}
  \begin{subfigure}[h]{0.32\textwidth}
    \centering
    \includegraphics[width=\textwidth]{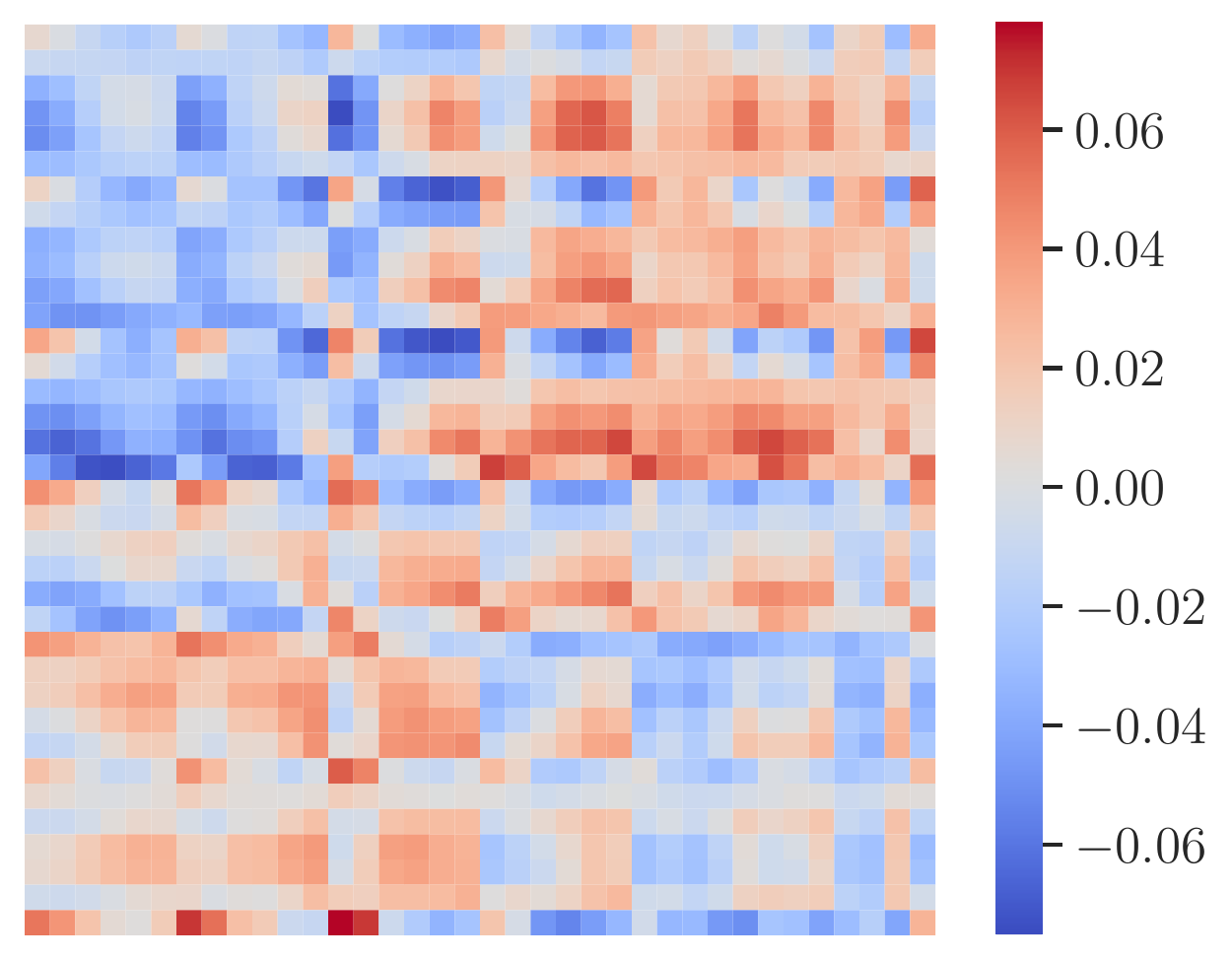}
  \end{subfigure}

  \begin{subfigure}[h]{0.32\textwidth}
    \centering
    \includegraphics[width=\textwidth]{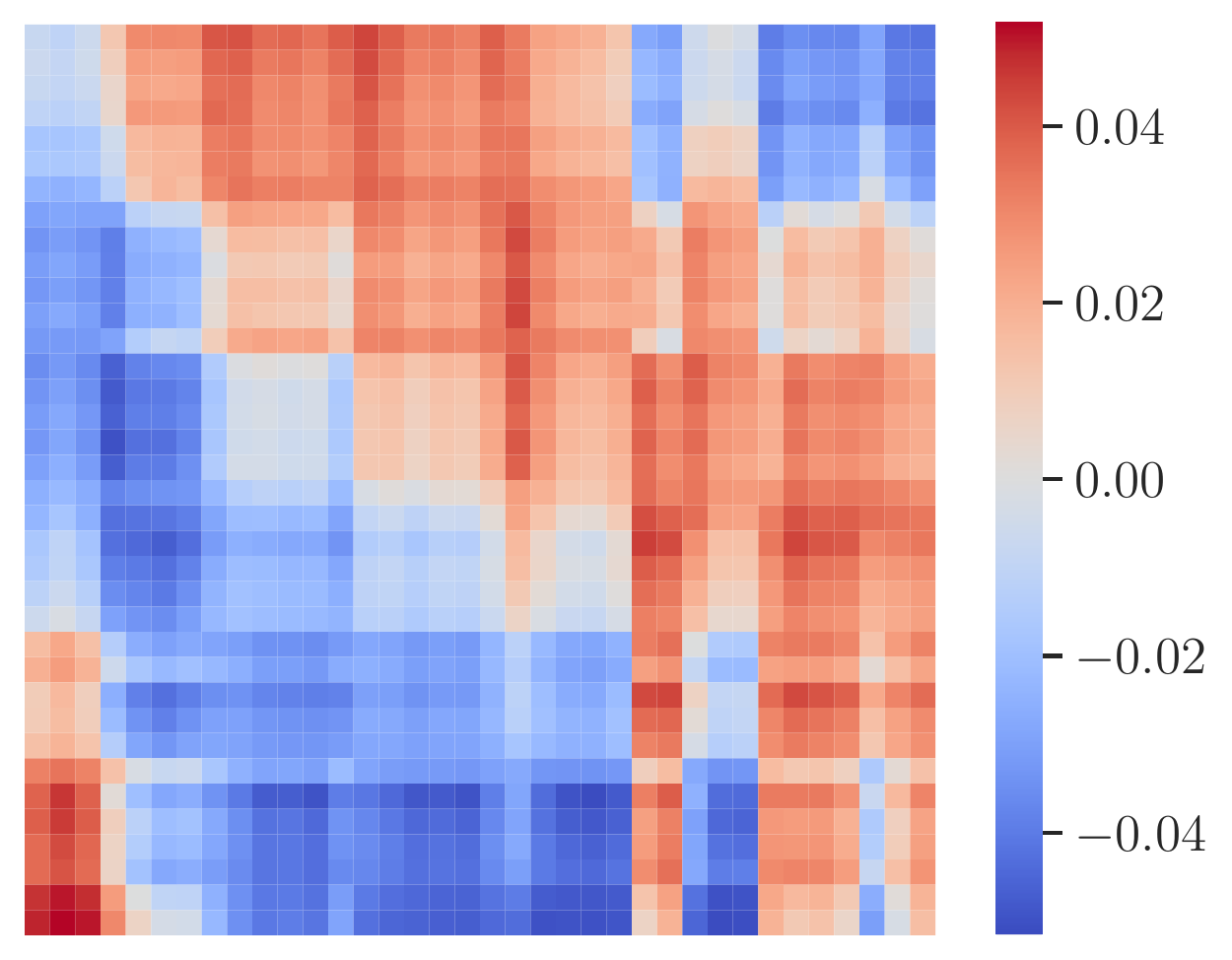}
    \caption{Skew-symmetric}
  \end{subfigure}
  \begin{subfigure}[h]{0.32\textwidth}
    \centering
    \includegraphics[width=\textwidth]{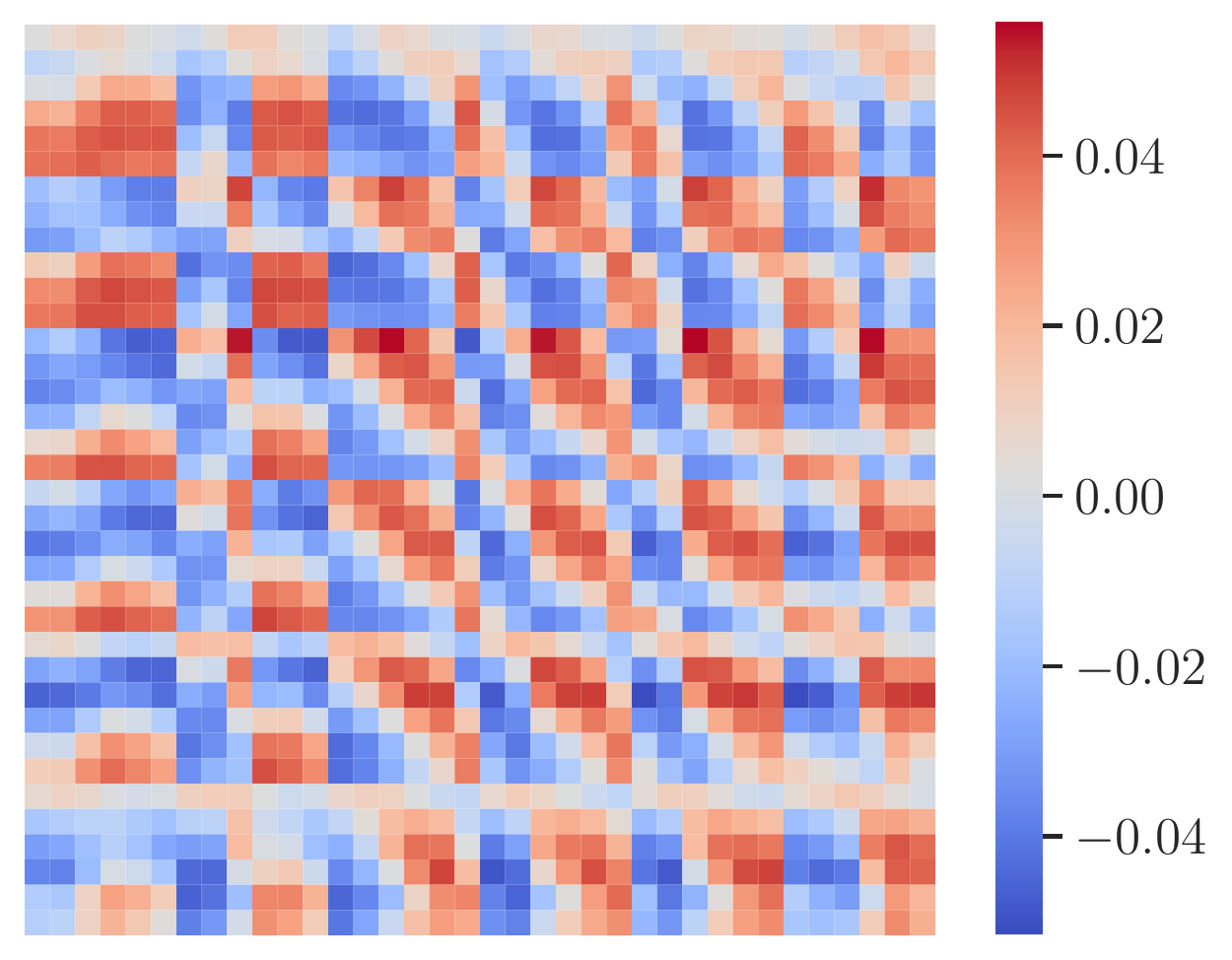}
    \caption{Toeplitz}
    \label{fig:toeplitz}
  \end{subfigure}
  \begin{subfigure}[h]{0.32\textwidth}
    \centering
    \includegraphics[width=\textwidth]{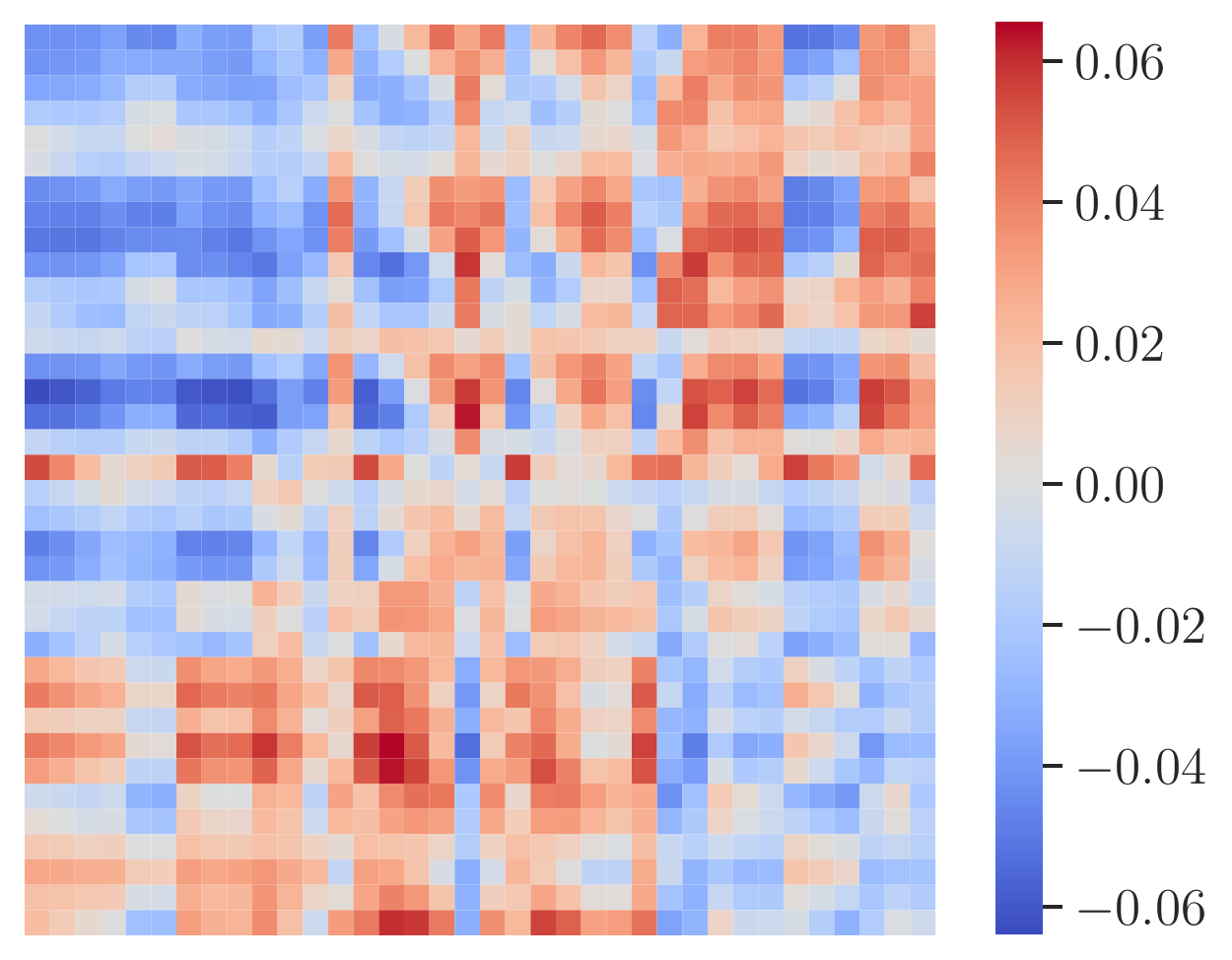}
    \caption{Multi-scale}
  \end{subfigure}
  \caption{Emerging group structures when learning group actions for classification on \texttt{CIFAR10}.}
  \label{fig:group_actions}
\end{figure}

\textbf{Skew-symmetric structure.} Observing the first column of \cref{fig:group_actions} we notice a skew-symmetric structure. Considering a single row of these matrices, most of them implement a \emph{causal averaging}: the value of each pixel is updated according to a weighted sum of the pixels following it. This is an interesting emerging structure as it corresponds broadly to two very well known operations: \emph{filtering} (or smoothing) from Computer Vision and \emph{average pooling} in deep learning.

\textbf{Toeplitz structure.} In the second column, the group actions have a Toeplitz (or approximately circulant) structure. Circulant matrices are of particular interest because of their ties to convolution, indicating a scheme of weight sharing or permutation operation being performed. Moreover, they are diagonalized by the Discrete Fourier Transform, which we explore in \cref{appendix:fourier}.

\begin{wrapfigure}{r}{0.62\textwidth}
  \centering
  \begin{subfigure}[h]{0.2\textwidth}
    \centering
    \includegraphics[width=\textwidth]{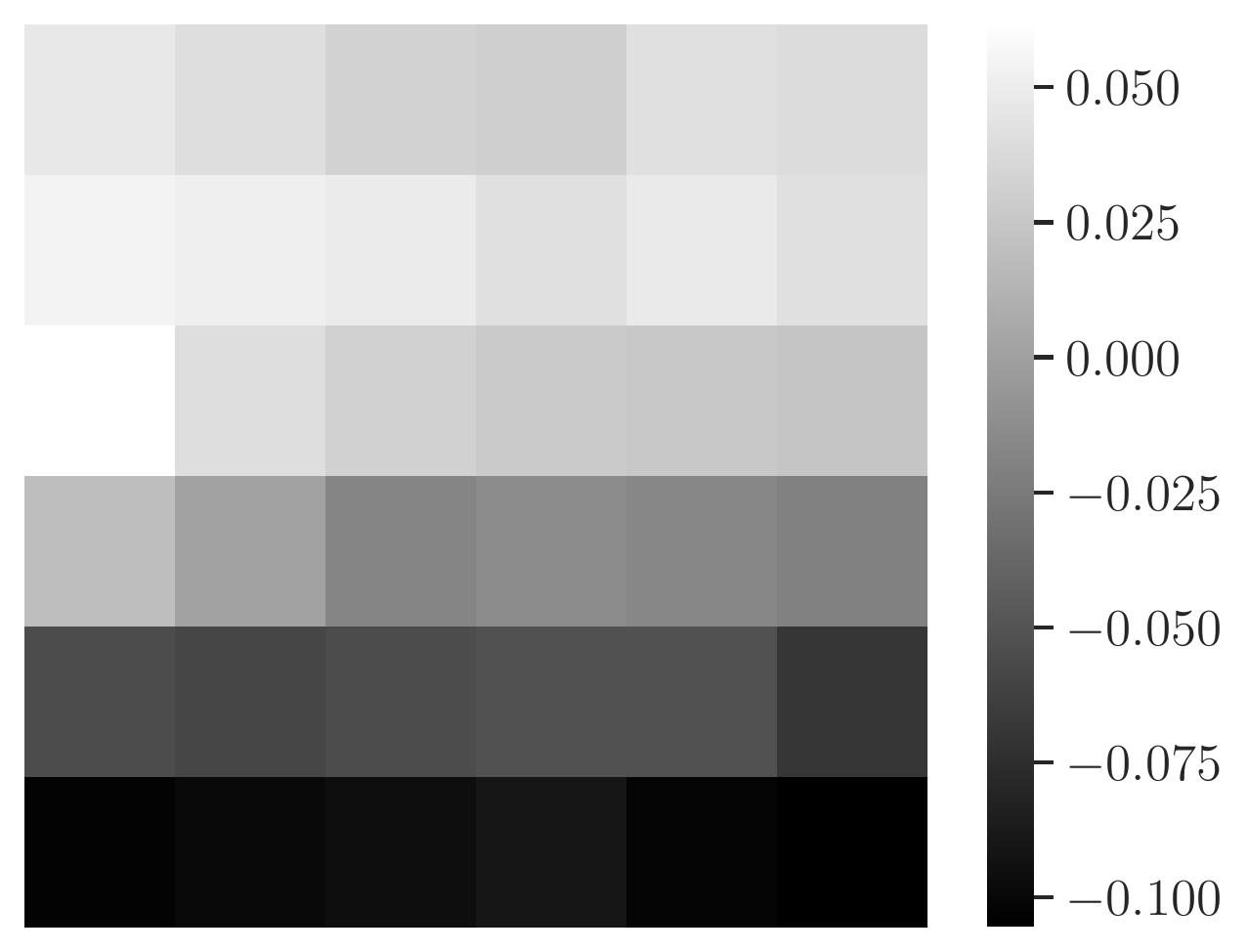}
    \caption{Skew-symmetric}
  \end{subfigure}
  \begin{subfigure}[h]{0.2\textwidth}
    \centering
    \includegraphics[width=\textwidth]{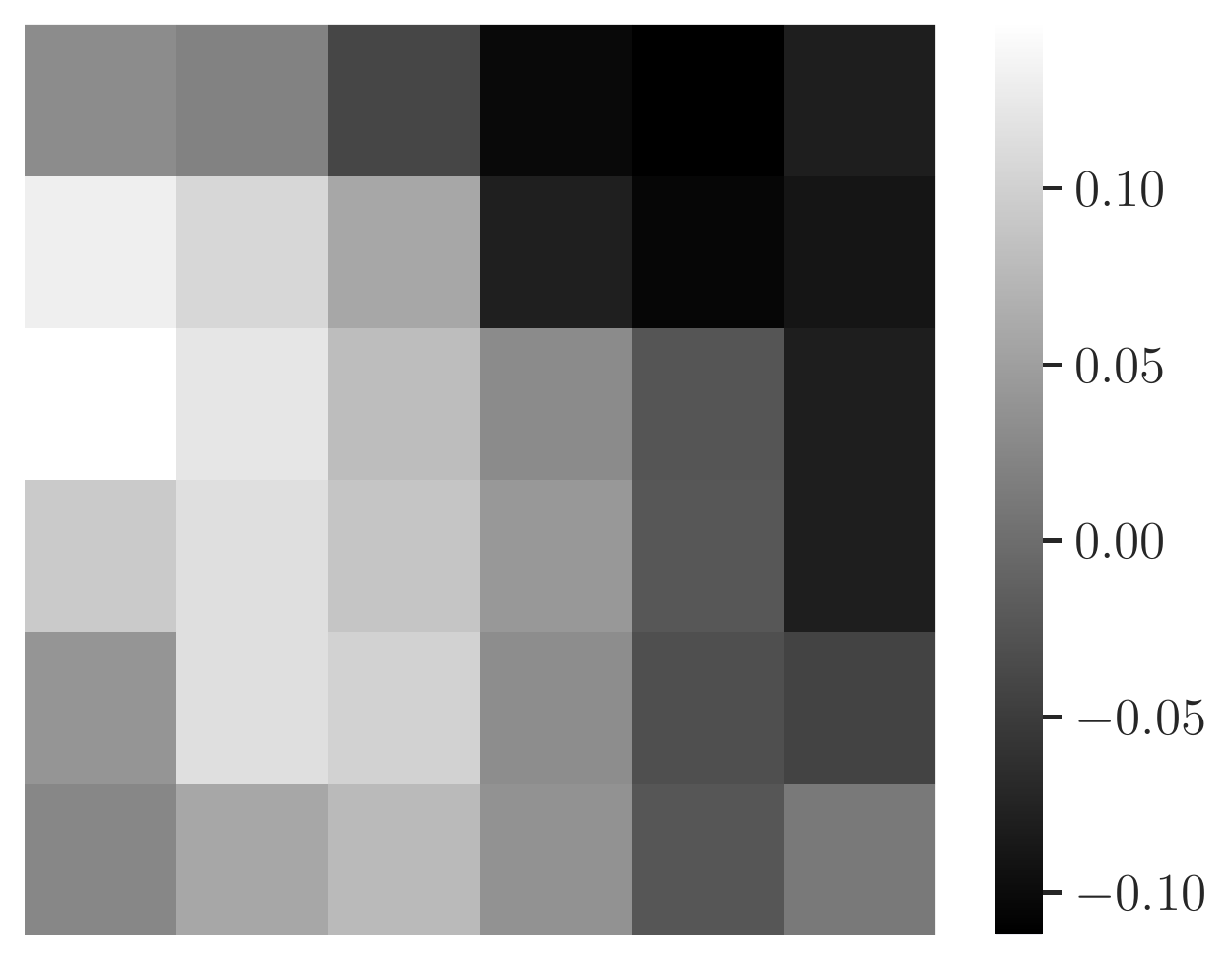}
    \caption{Toeplitz}
  \end{subfigure}
  \begin{subfigure}[h]{0.2\textwidth}
    \centering
    \includegraphics[width=\textwidth]{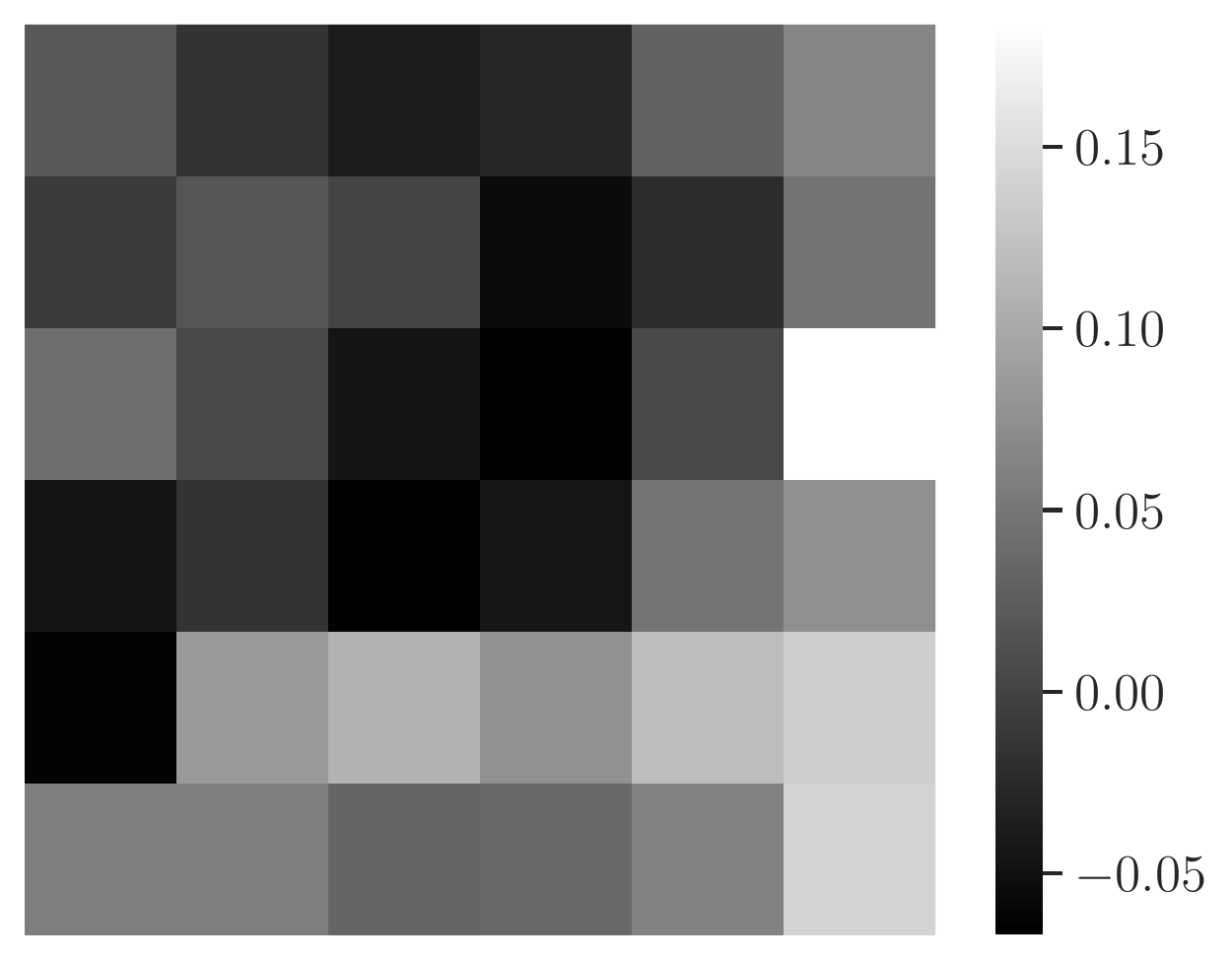}
    \caption{Multi-scale}
  \end{subfigure}
  \caption{Effect of the group actions of \cref{fig:group_actions} on identity matrices.}
  \label{fig:action_effect}
\end{wrapfigure}

\textbf{Multi-scale structure.} Finally, the group actions in the third column have a multi-scale structure. Indeed, examining the recovered matrices at the quadrant level, we observe each quadrant having a dominant sign signature, either positive or negative (for example, in both figures, the lower left quadrant contains mostly positive values). However, ``zooming'' into specific blocks that describe the relation between rows and columns of the original filter ($6\times 6$ blocks) we observe significantly different structures, both within quadrants (for example, the last and third to last block in the first ``block-row'' of the bottom figure) and across quadrants (the first and last blocks in the same ``block-row'').

\subsubsection{Effect of the group actions}
To gain better intuition and evaluate our analysis of the recovered structures, we applied the group actions to $\bm{I}_6 \in \mathbb{R}^{6\times 6}$, the identity matrix of $\mathbb{R}^6$. Due to its simple structure, it allows us to understand the effect of the linear operations defined by $\bm{A}_{(k, l)}$. Note that this is not trivial, as $\bm{A}_{(k, l)}$ acts on the vectorization of $\bm{I}_6$. We applied the group actions of the bottom row of \cref{fig:group_actions} on $\bm{I}_6$ and we visualize the effects of the group actions in \cref{fig:action_effect}.

The effect of the skew-symmetric operator aligns with our intuition from \cref{subsec:recov}: the earlier pixels have higher intensities compared to the later ones. This is expected, as the average is over more pixels in the upper rows of the relevant group actions. This translates to more elements of the diagonal of $\bm{I}_6$ being included in the average, resulting in this gradient-like transformation on the input. For the Toeplitz structure, the diagonal has been slightly rotated towards the lower-left part of the matrix and has been significantly smoothed out. As we will further argue in \cref{subsubsec:comp}, this behavior is not surprising as the structure of the group actions resembles a composition of known operators. Finally, the multi-scale structure is harder to interpret; however, we note that considering the pixels above the antidiagonal we observe significantly lower values compared to the pixels below the antidiagonal. Examining the upper or lower parts in a different scale, we see different structures (positive values in the upper part and negative values in the lower part), in line with the multi-scale interpretation.

\subsection{Group structures in the wild}
To further interpret the recovered groups of \cref{subsec:recov}, we created synthetic experiments where we control the group action that is being performed. To that end, we considered the \texttt{CIFAR10} dataset and we designed the following experiment
\begin{itemize}[noitemsep,nolistsep]
  \item For every image $i$ in the dataset, we extract a random patch $\bm{x}_{p_i}$ of size $6\times 6$.
  \item We apply a transformation on the extracted patch to get $\bm{y}_{p_i}$ and consider the tuple $(\bm{x}_{p_i}, \bm{y}_{p_i})$ as a training point. The transformations we apply are rotations over a fixed angle and average pooling of a fixed radius\footnote{We used a variation \href{https://gist.github.com/rwightman/f2d3849281624be7c0f11c85c87c1598}{Ross Wightman}'s implementation of median filtering for PyTorch.}.
  \item We train a single layer linear network with no nonlinearity such that $\hat{\bm{y}} = \bm{A}\bm{x}$ and train using the MSE loss.
\end{itemize}
The above experimental setup simulates the application of the group action on the filter groups. By successfully learning $\bm{A}$ in this sterile setting where we control the action between successive elements we can interpret the recovered actions of \cref{subsec:recov}.

The results are presented in \cref{fig:median,fig:rot} for average pooling and rotating the patches, respectively. We observe that the learned matrices have the ``expected'' structure: rotation matrices of $90\degree$ have a familiar grid structure, and the structure of the interpolations aligns with that reported, for example, in \citep{DWL+21}. For average pooling, we again see a diagonal structure that respects the interactions between the pixels. However, we see that both structures differ from the exact observed structure of \cref{fig:toeplitz}, which we discuss below.

\begin{figure}[t]
  \centering
  \begin{subfigure}[h]{0.24\textwidth}
    \centering
    \includegraphics[width=\textwidth]{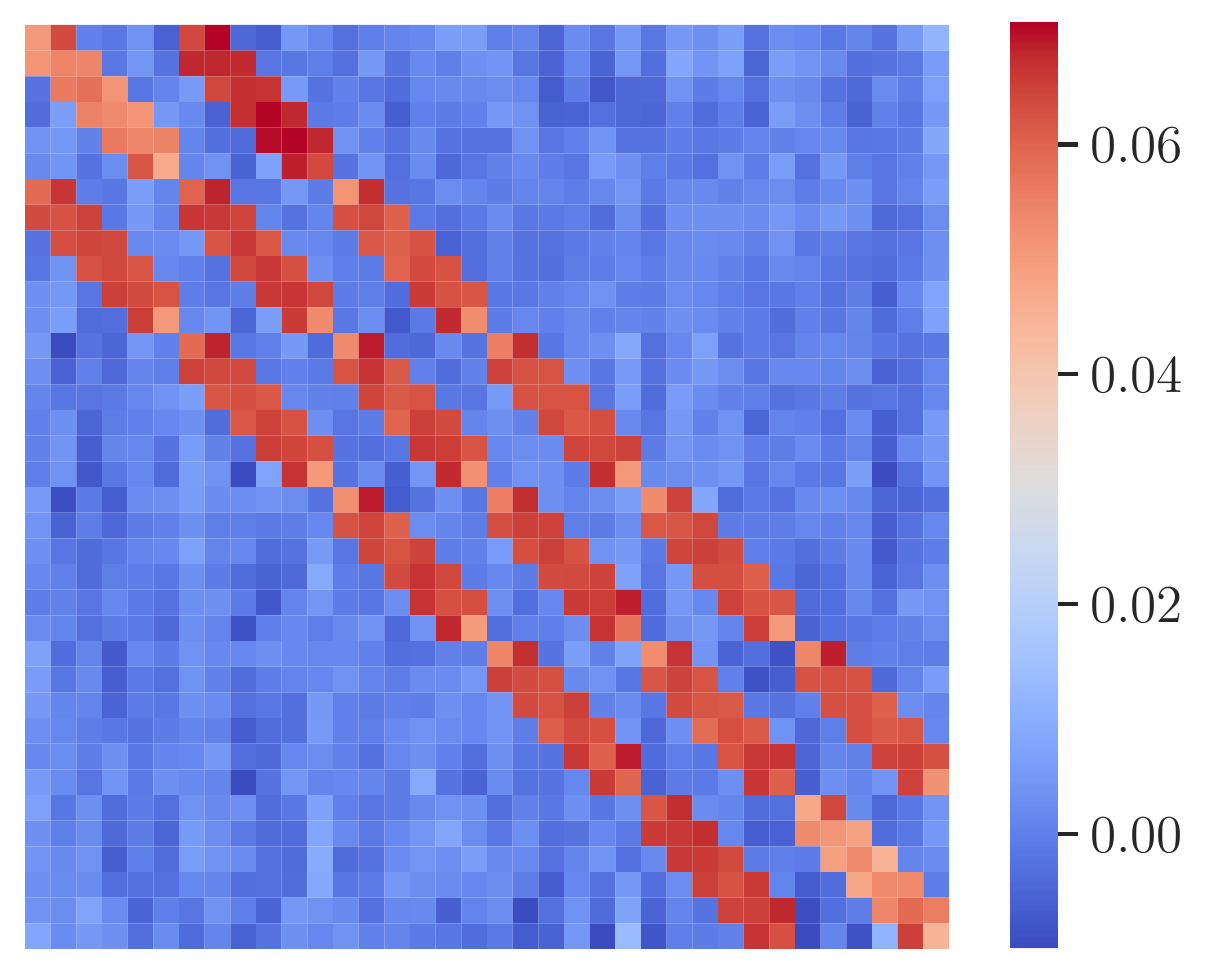}
    \caption{$r = 3$}
  \end{subfigure}
  \begin{subfigure}[h]{0.24\textwidth}
    \centering
    \includegraphics[width=\textwidth]{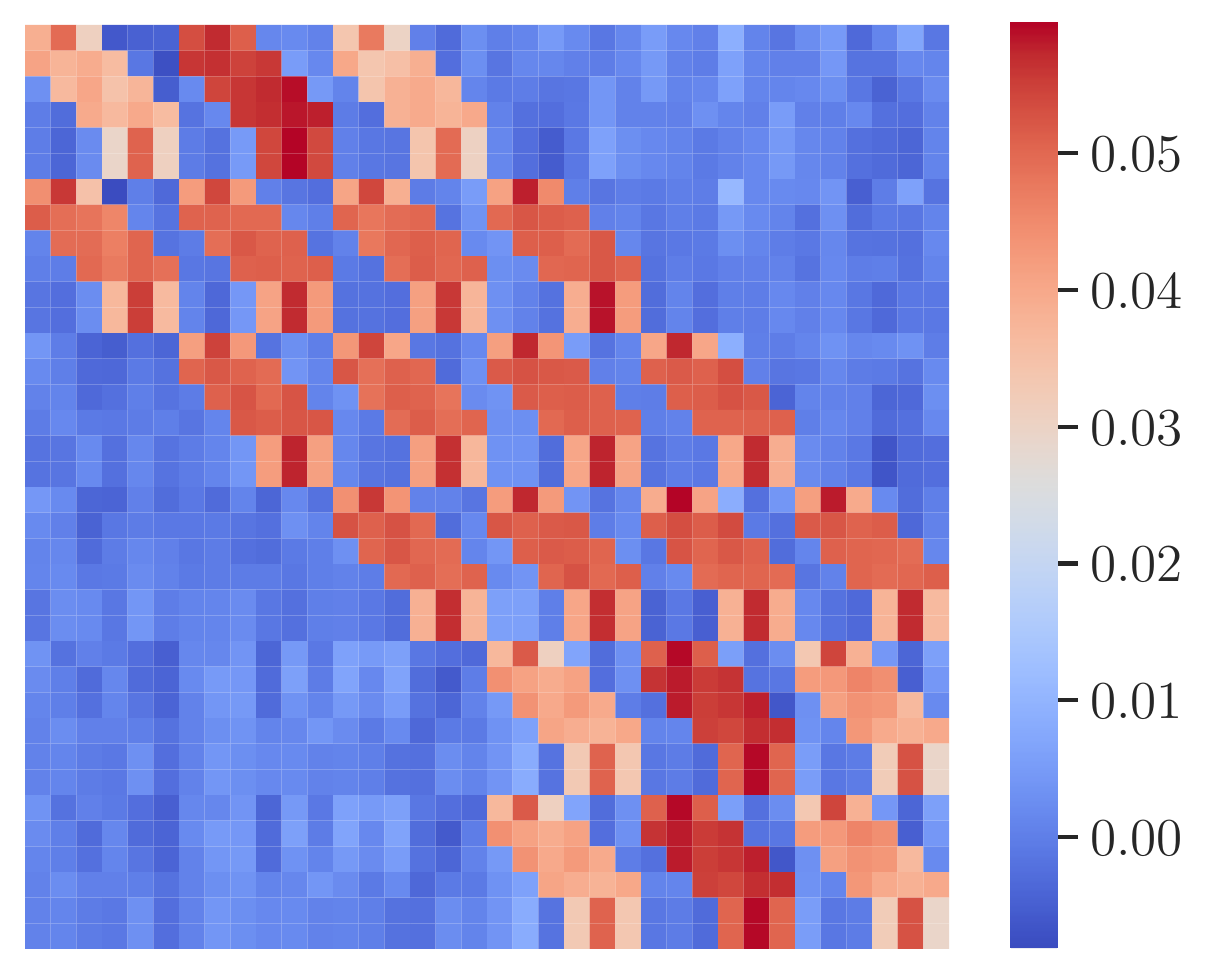}
    \caption{$r = 4$}
  \end{subfigure}
  \begin{subfigure}[h]{0.24\textwidth}
    \centering
    \includegraphics[width=\textwidth]{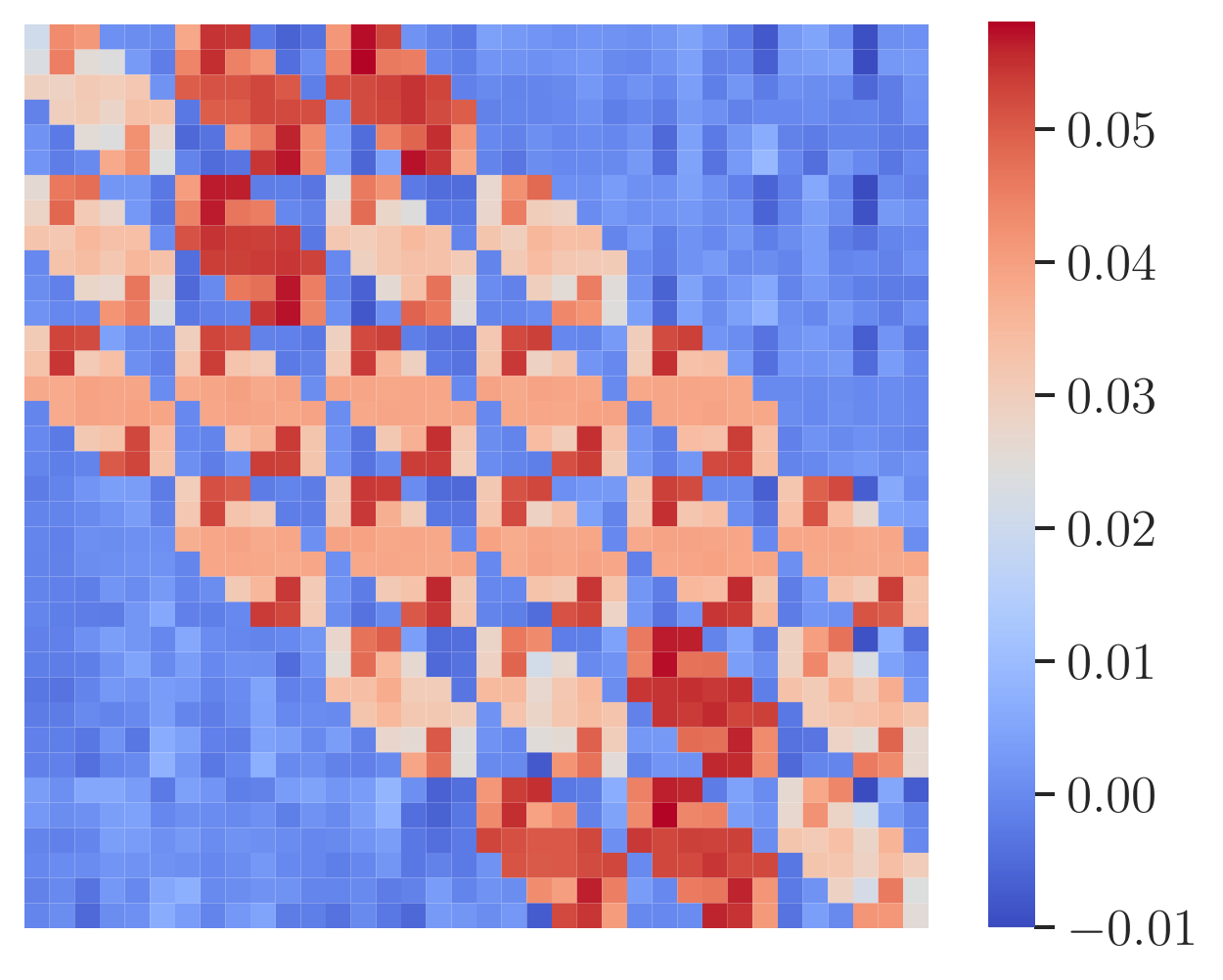}
    \caption{$r = 5$}
  \end{subfigure}
  \begin{subfigure}[h]{0.24\textwidth}
    \centering
    \includegraphics[width=\textwidth]{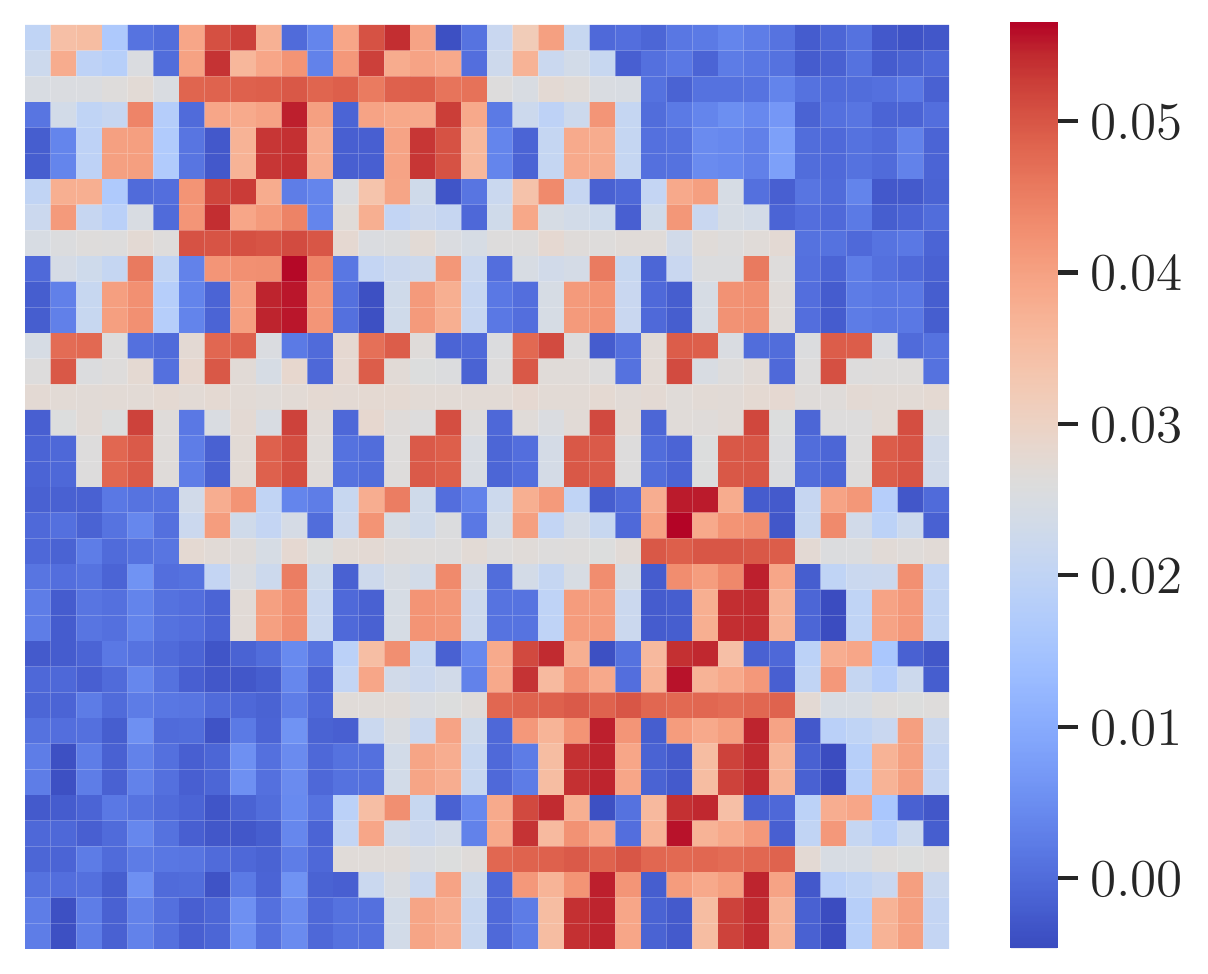}
    \caption{$r = 6$}
  \end{subfigure}
  \caption{Learned group actions when performing average pooling using a radius $r \in \{3, 4, 5, 6\}$.}
  \label{fig:median}
\end{figure}

\begin{figure}[t]
  \centering
  \begin{subfigure}[h]{0.24\textwidth}
    \centering
    \includegraphics[width=\textwidth]{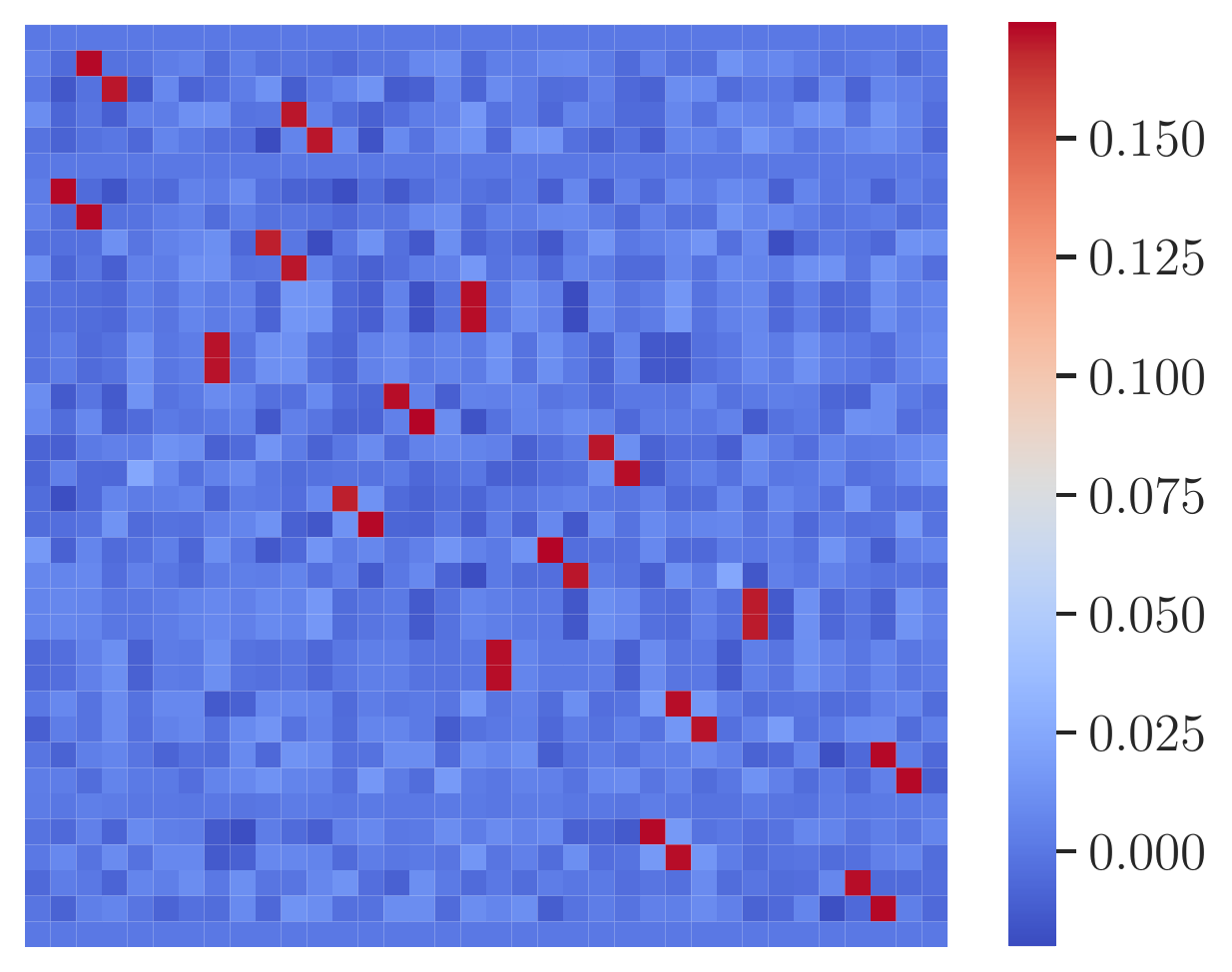}
    \caption{$\theta = 30\degree$}
  \end{subfigure}
  \begin{subfigure}[h]{0.24\textwidth}
    \centering
    \includegraphics[width=\textwidth]{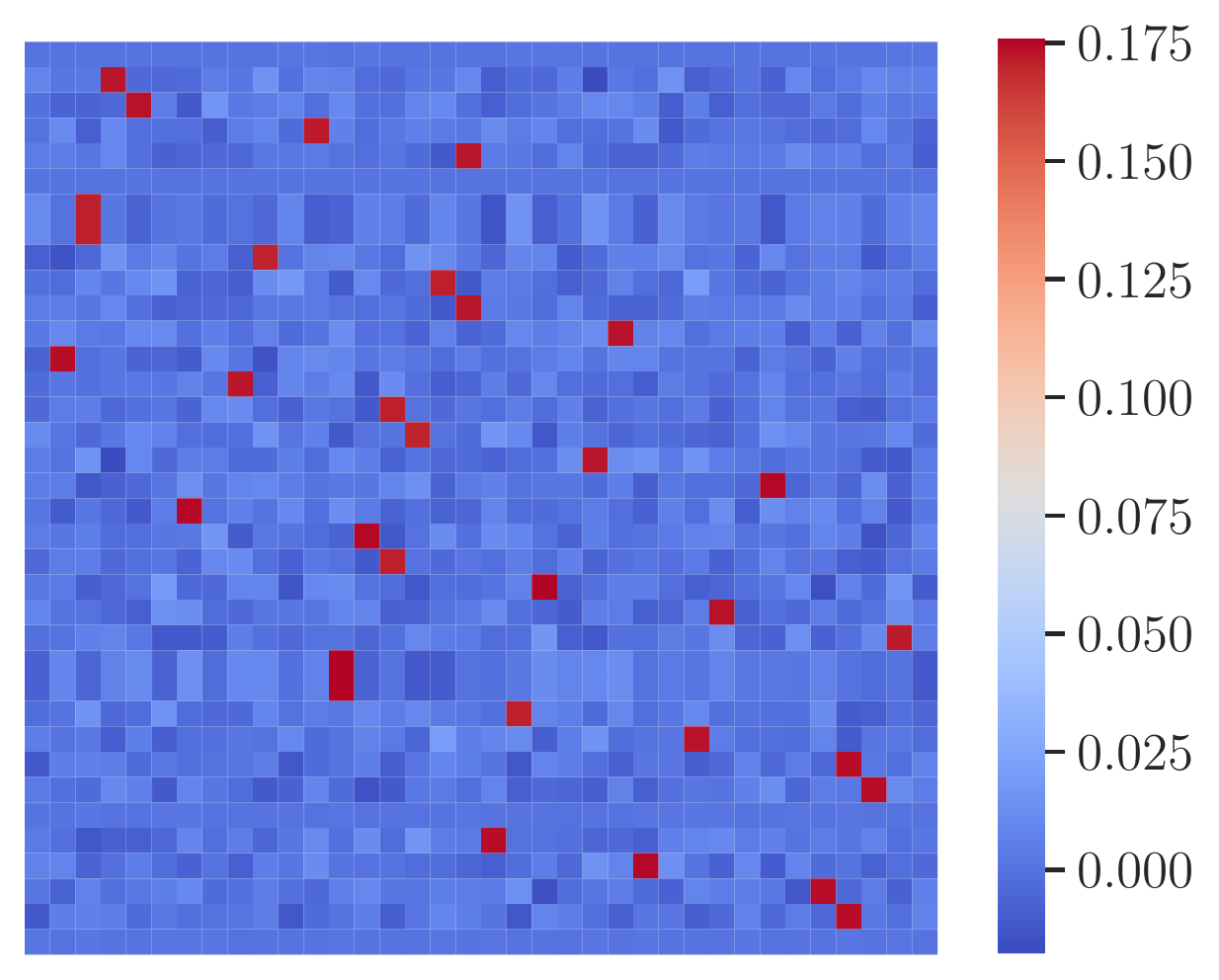}
    \caption{$\theta = 45\degree$}
  \end{subfigure}
  \begin{subfigure}[h]{0.24\textwidth}
    \centering
    \includegraphics[width=\textwidth]{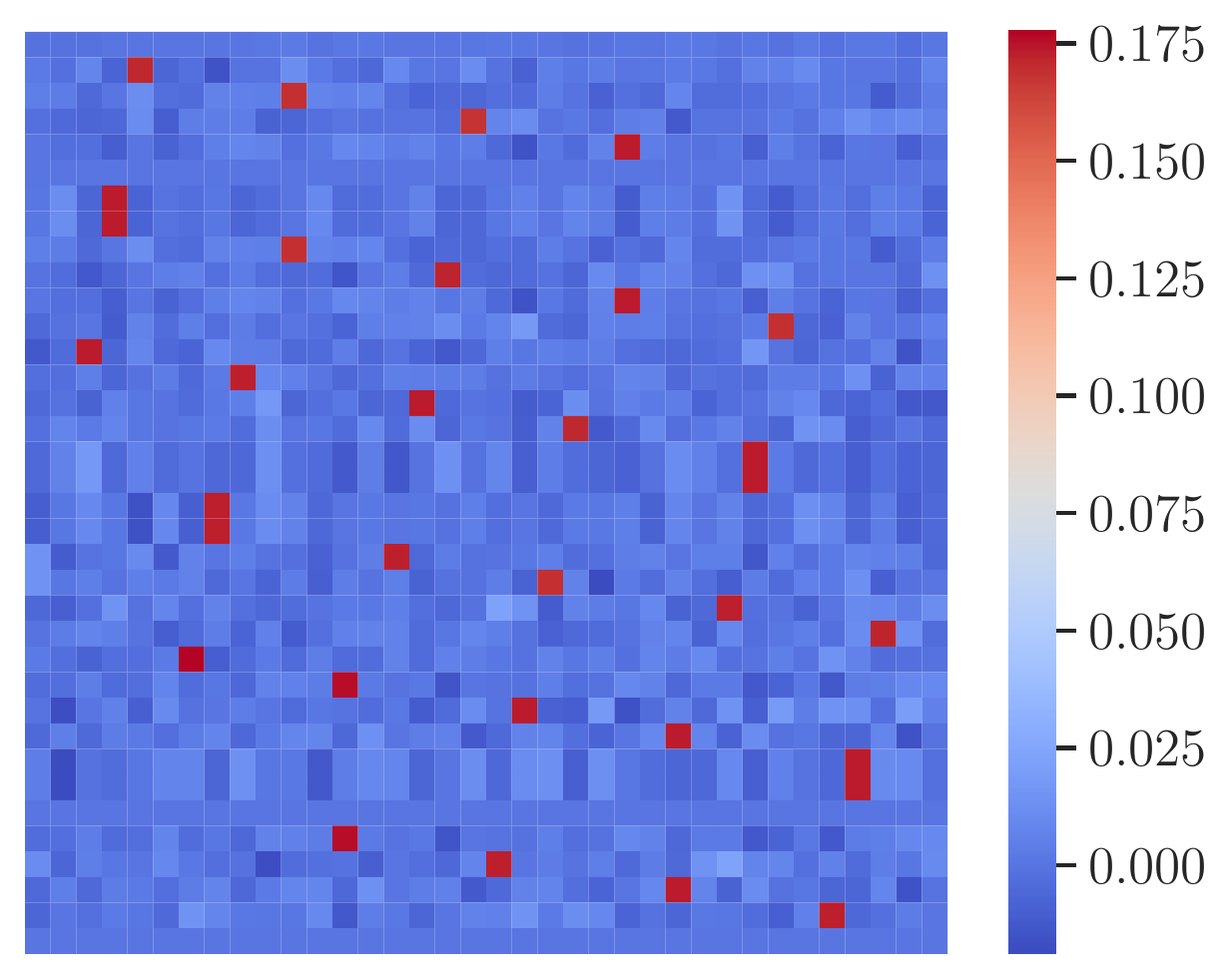}
    \caption{$\theta = 60\degree$}
  \end{subfigure}
  \begin{subfigure}[h]{0.24\textwidth}
    \centering
    \includegraphics[width=\textwidth]{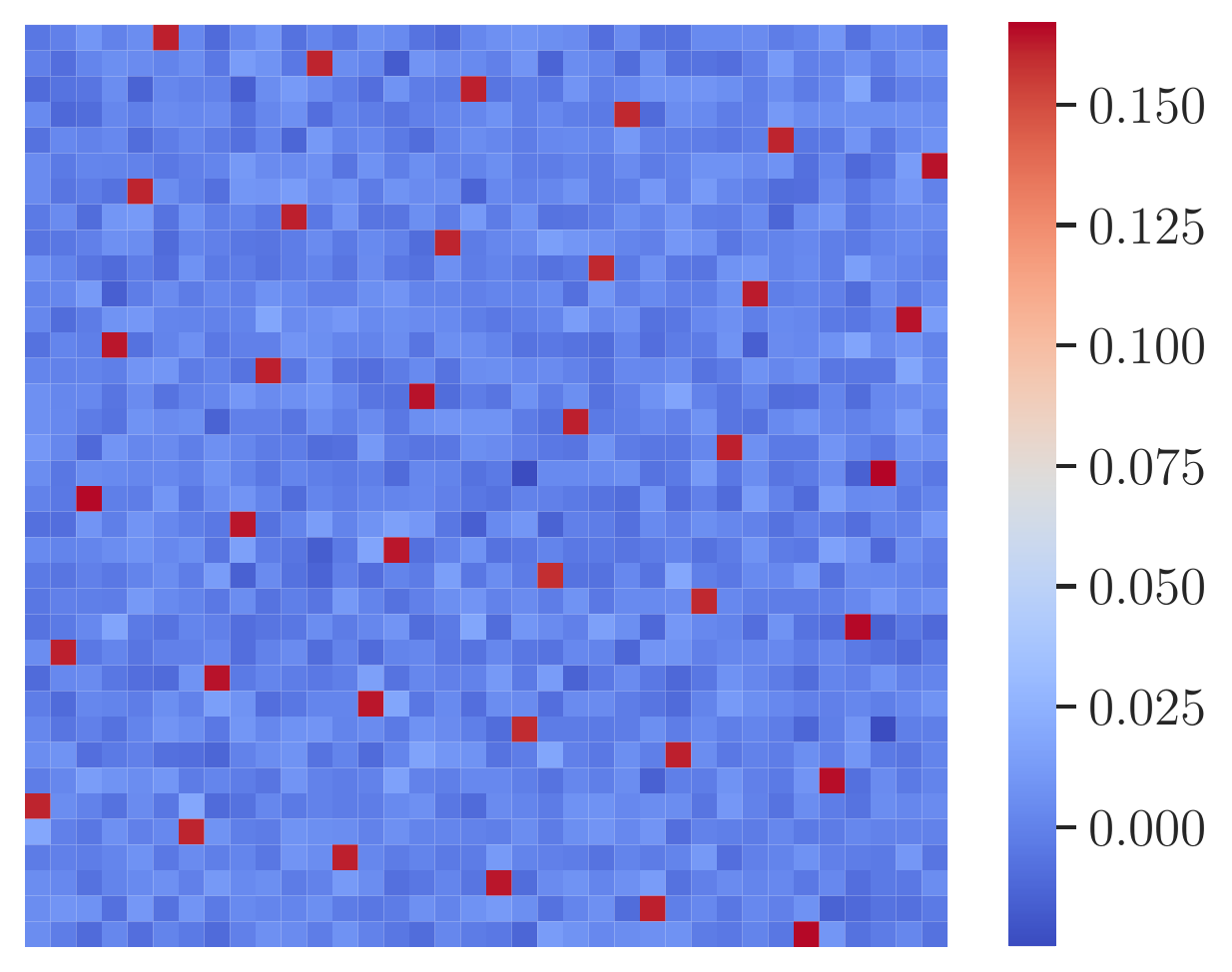}
    \caption{$\theta = 90\degree$}
  \end{subfigure}
  \caption{Learned group actions when rotating patches using an angle $\theta \in \{30\degree, 45\degree, 60\degree, 90\degree\}$.}
  \label{fig:rot}
\end{figure}

\subsubsection{Compositions of actions}
\label{subsubsec:comp}
While average pooling and rotations might not directly translate to the structures of \cref{fig:group_actions}, we see that they do have some correlation. This lead us to design another experiment, where we consider a composition of pooling and rotation\footnote{When $\theta$ does not correspond to an elementary rotation (i.e., $\theta \neq k \cdot 90\degree$ for some $k\in \mathbb{Z}$), interpolations occur. For this reason, in the composition we apply average pooling first before the rotation to minimize the effect of interpolating artifacts.}. The resulting actions can be seen in \cref{fig:comp}. We observe that by composing the two operations, the learned group action has the blocks of median filtering at the location grid defined by the rotation action. Reinterpreting the learned actions of \cref{fig:group_actions} under this new lense, we argue that the Toeplitz structure that we uncovered interpolates rotations and simultaneously applies pooling.

\begin{figure}[b]
  \centering
  \begin{subfigure}[h]{0.32\textwidth}
    \centering
    \includegraphics[width=\textwidth]{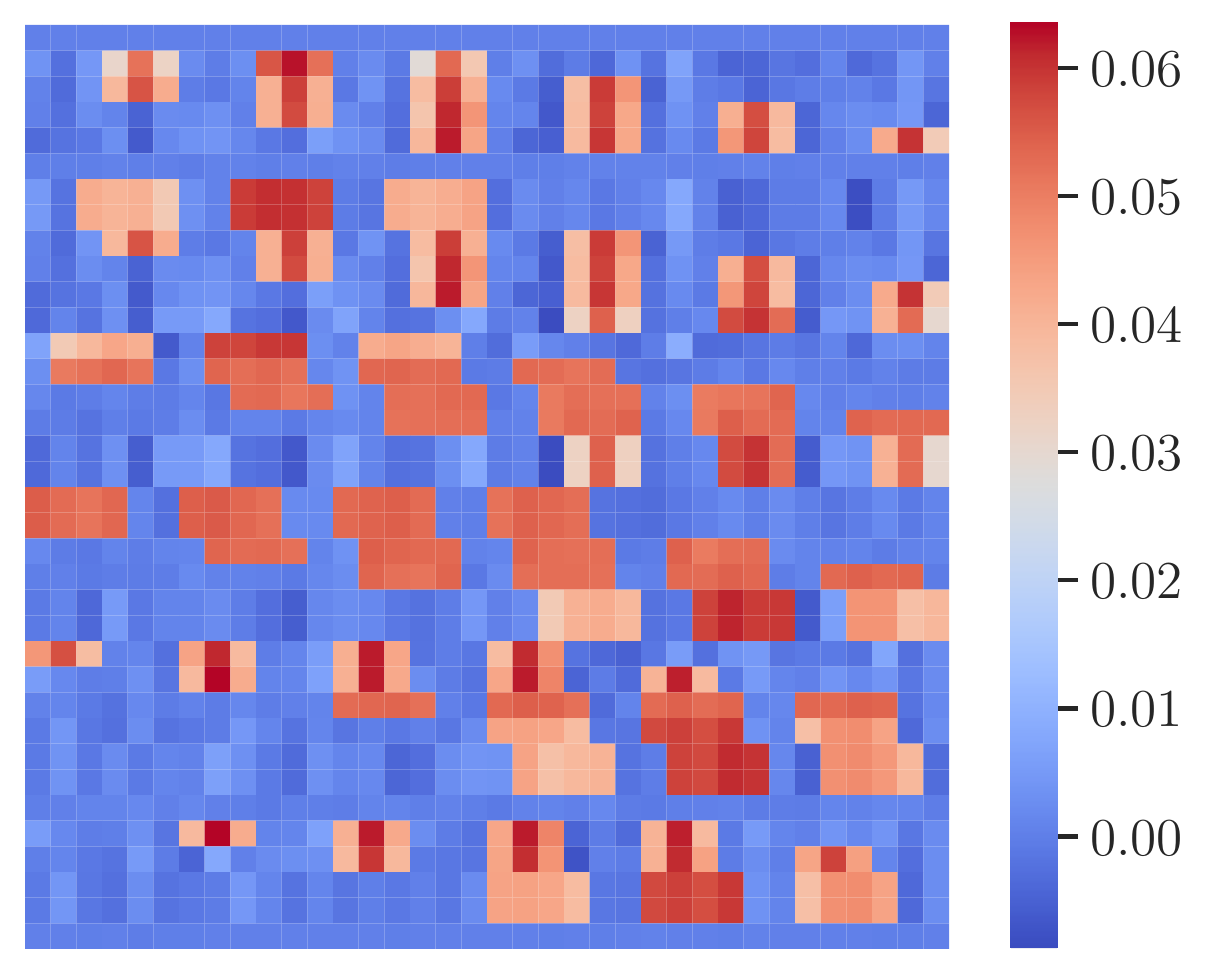}
    \caption{$r = 4$}
  \end{subfigure}
  \begin{subfigure}[h]{0.32\textwidth}
    \centering
    \includegraphics[width=\textwidth]{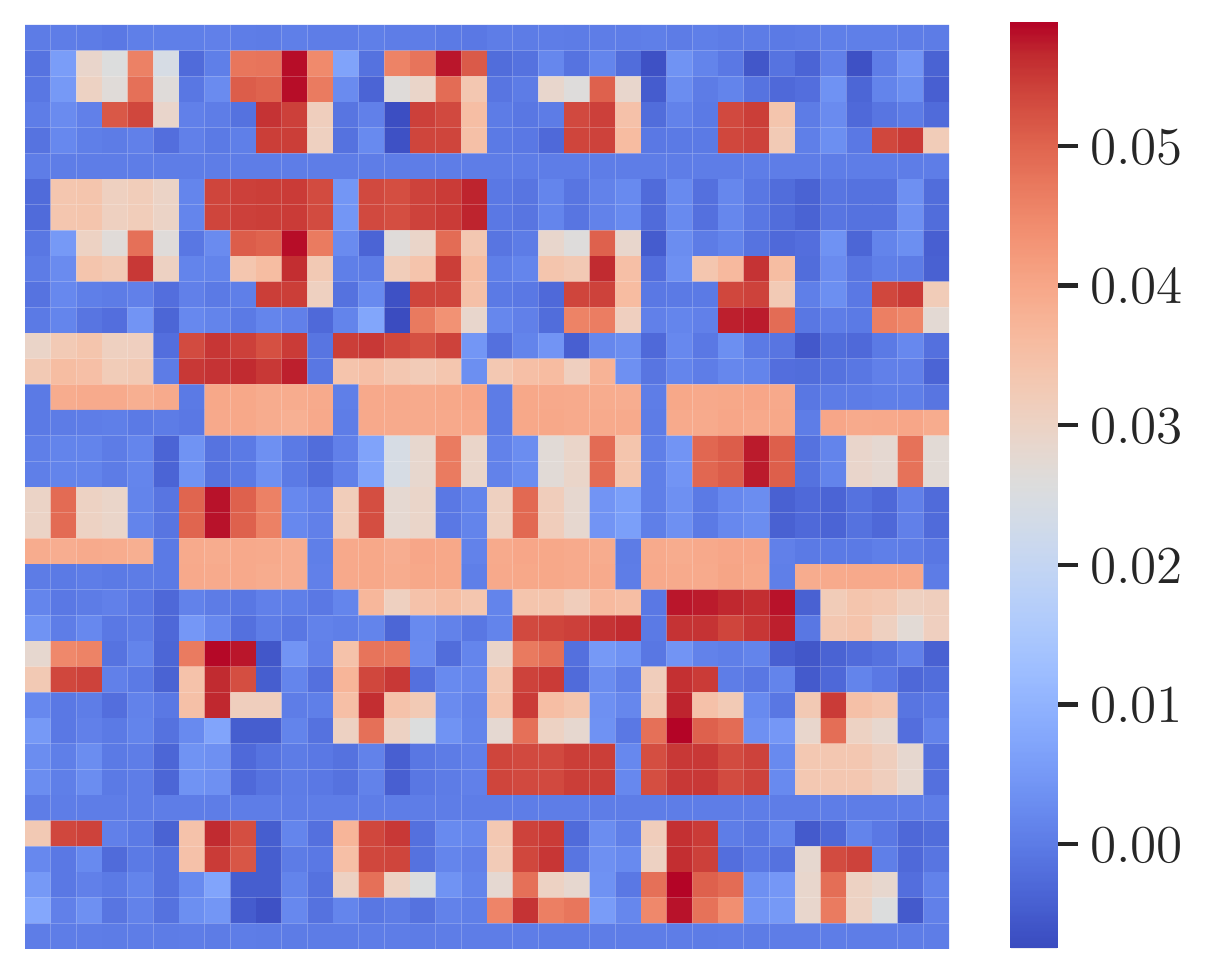}
    \caption{$r = 5$}
  \end{subfigure}
  \begin{subfigure}[h]{0.32\textwidth}
    \centering
    \includegraphics[width=\textwidth]{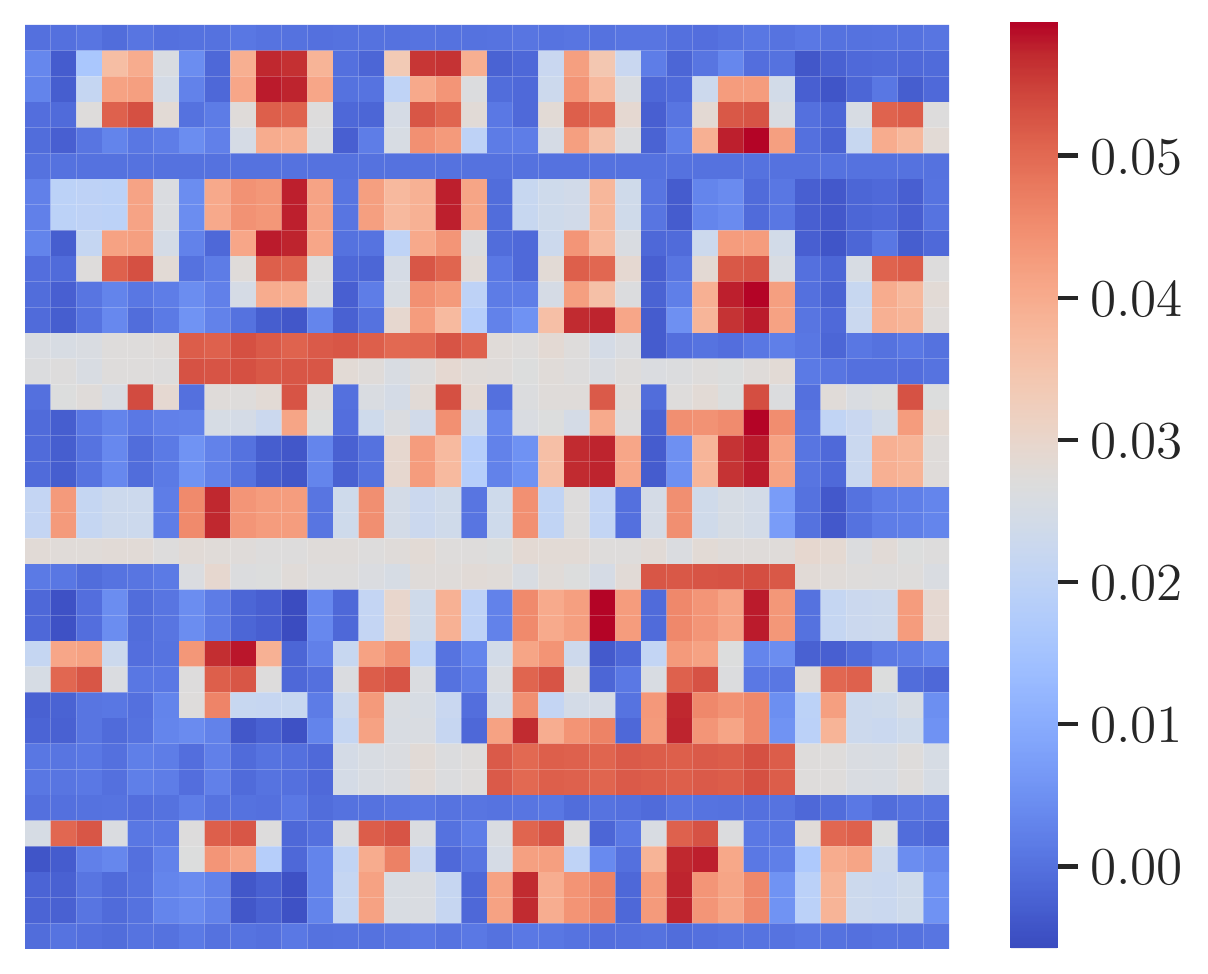}
    \caption{$r = 6$}
  \end{subfigure}
  \caption{Learned group actions when composing rotations and average pooling using a radius $r \in \{4, 5, 6\}$ (with a fixed angle $\theta = 60\degree$).}
  \label{fig:comp}
\end{figure}

\subsection{Dependence on data distribution and task}
\label{sec:data_task}
One of our theses was that the element of $\operatorname{GL}_{n\cdot m}(\mathbb{R})$ that our method learns for each group depends on both the data distribution and the downstream task. We evaluate these hypotheses below.

\subsubsection{Group actions on \texttt{MNIST}}
To evaluate the dependence of the group actions on the data distribution, we applied our method on a classification task on \texttt{MNIST}, an image dataset consisting of handwritten digits. In \cref{fig:group_mnist} we recovered the same structures as we did on \texttt{CIFAR10}, indicating that data coming from the same distribution (in this case, natural images) result in the same group actions.

\begin{figure}[t]
  \centering
  \begin{subfigure}[h]{0.32\textwidth}
    \centering
    \includegraphics[width=\textwidth]{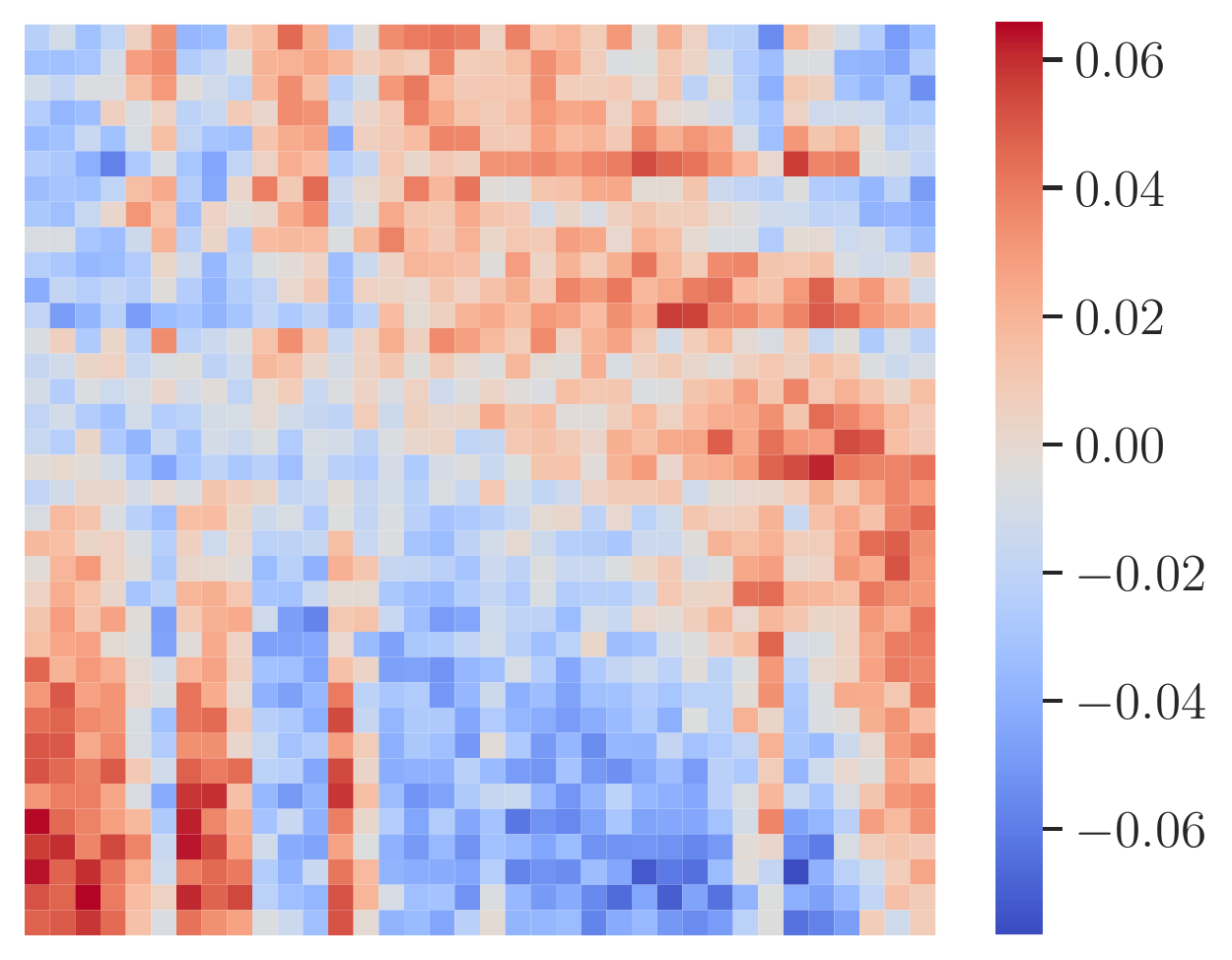}
    \caption{Skew-symmetric}
  \end{subfigure}
  \begin{subfigure}[h]{0.32\textwidth}
    \centering
    \includegraphics[width=\textwidth]{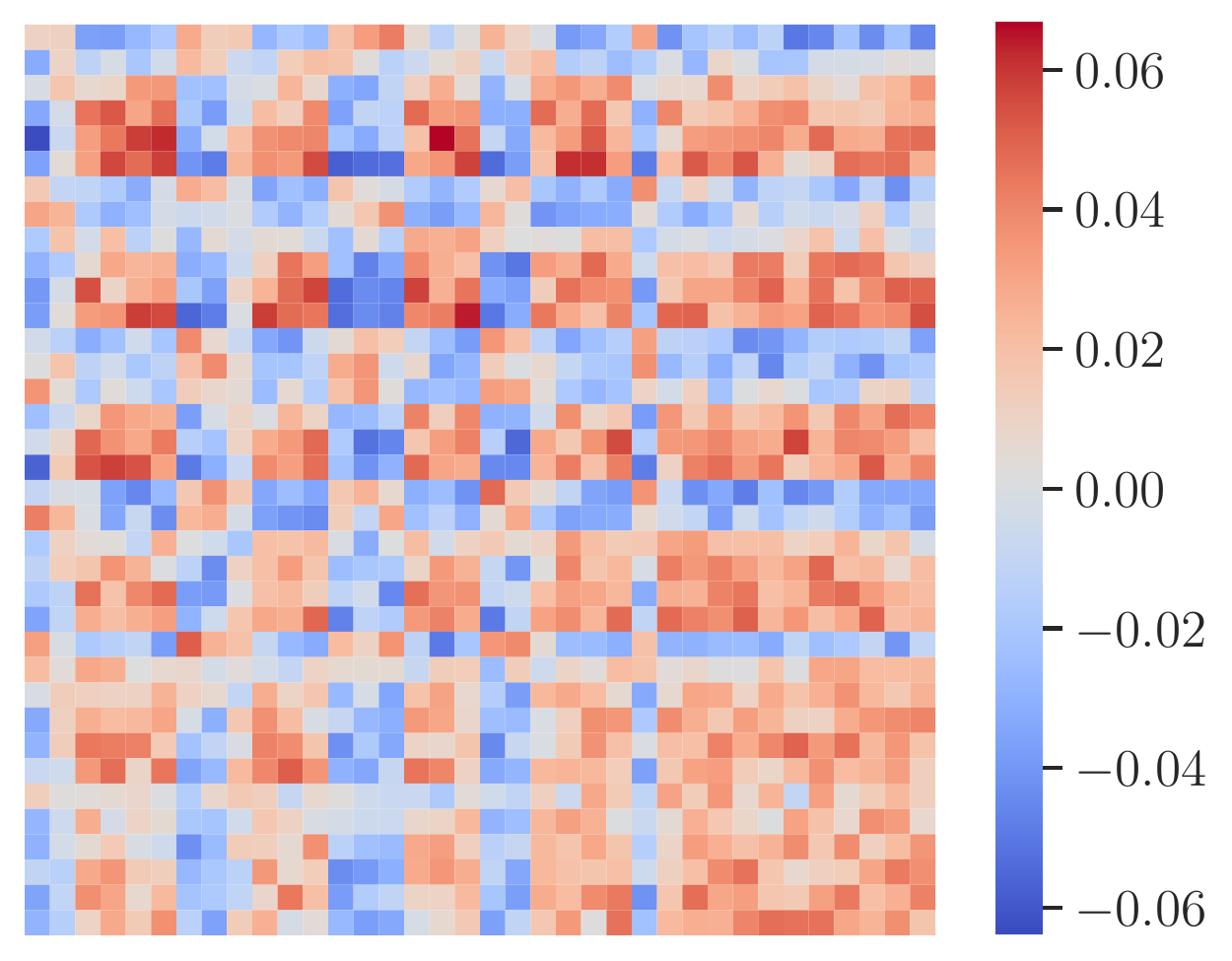}
    \caption{Toeplitz}
  \end{subfigure}
  \begin{subfigure}[h]{0.32\textwidth}
    \centering
    \includegraphics[width=\textwidth]{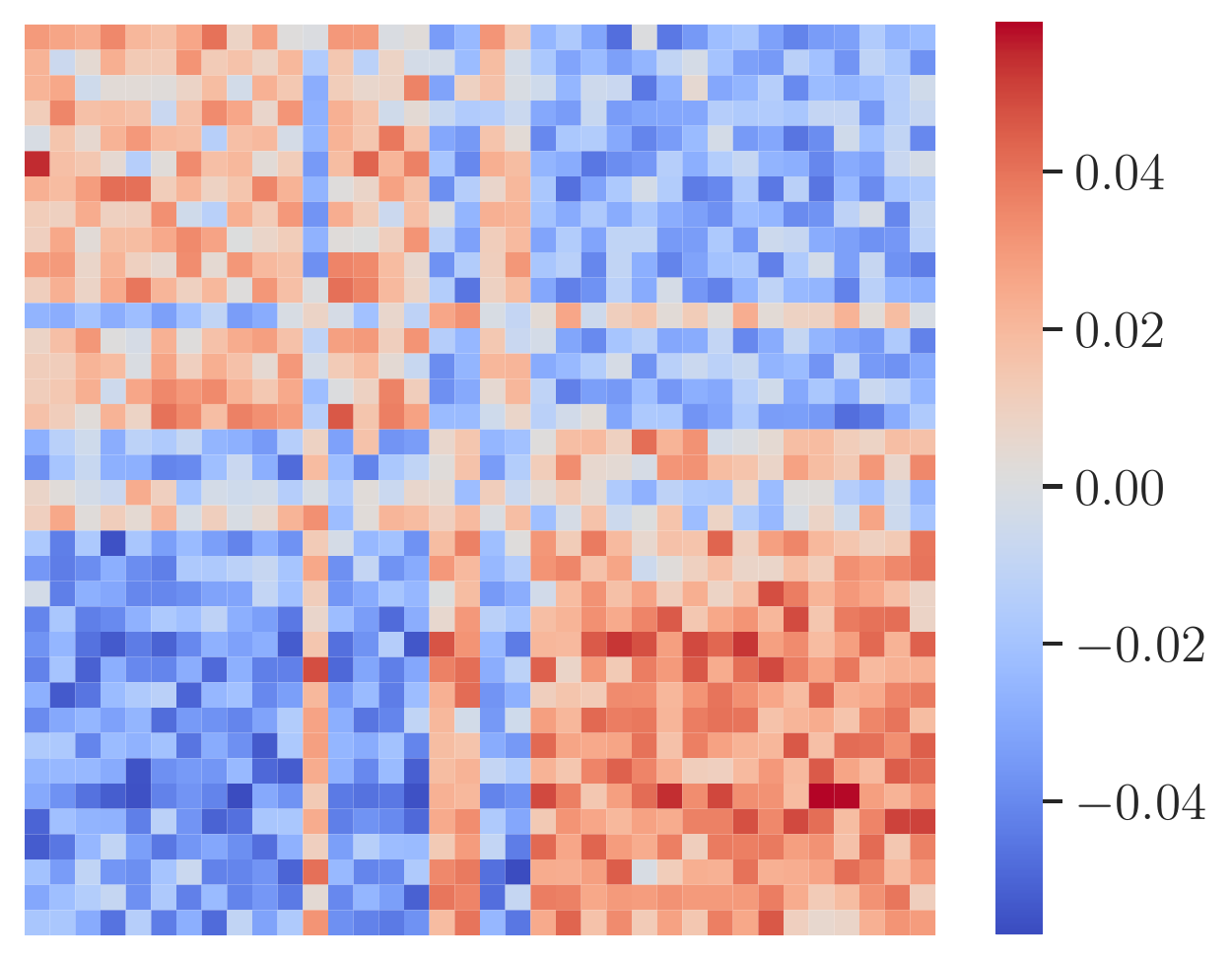}
    \caption{Multi-scale}
  \end{subfigure}
  \caption{Group actions learned on \texttt{MNIST} for classification.}
  \label{fig:group_mnist}
\end{figure}

\subsubsection{Reconstruction}
Finally, to test our hypothesis that group actions depend on the downstream task, we evaluated our method on the task of reconstructing the input. We present learned actions in \cref{fig:group_recon} for both \texttt{CIFAR10} and \texttt{MNIST}. We find that the actions for both datasets bear similarity, indicating further the dependence on the data distribution, and the emerging structures are distinctly different: contrary to actions for classification, actions for reconstruction exhibit multiple concentrated blocks.

\begin{figure}[h]
  \centering
  \begin{subfigure}[h]{0.24\textwidth}
    \centering
    \includegraphics[width=\textwidth]{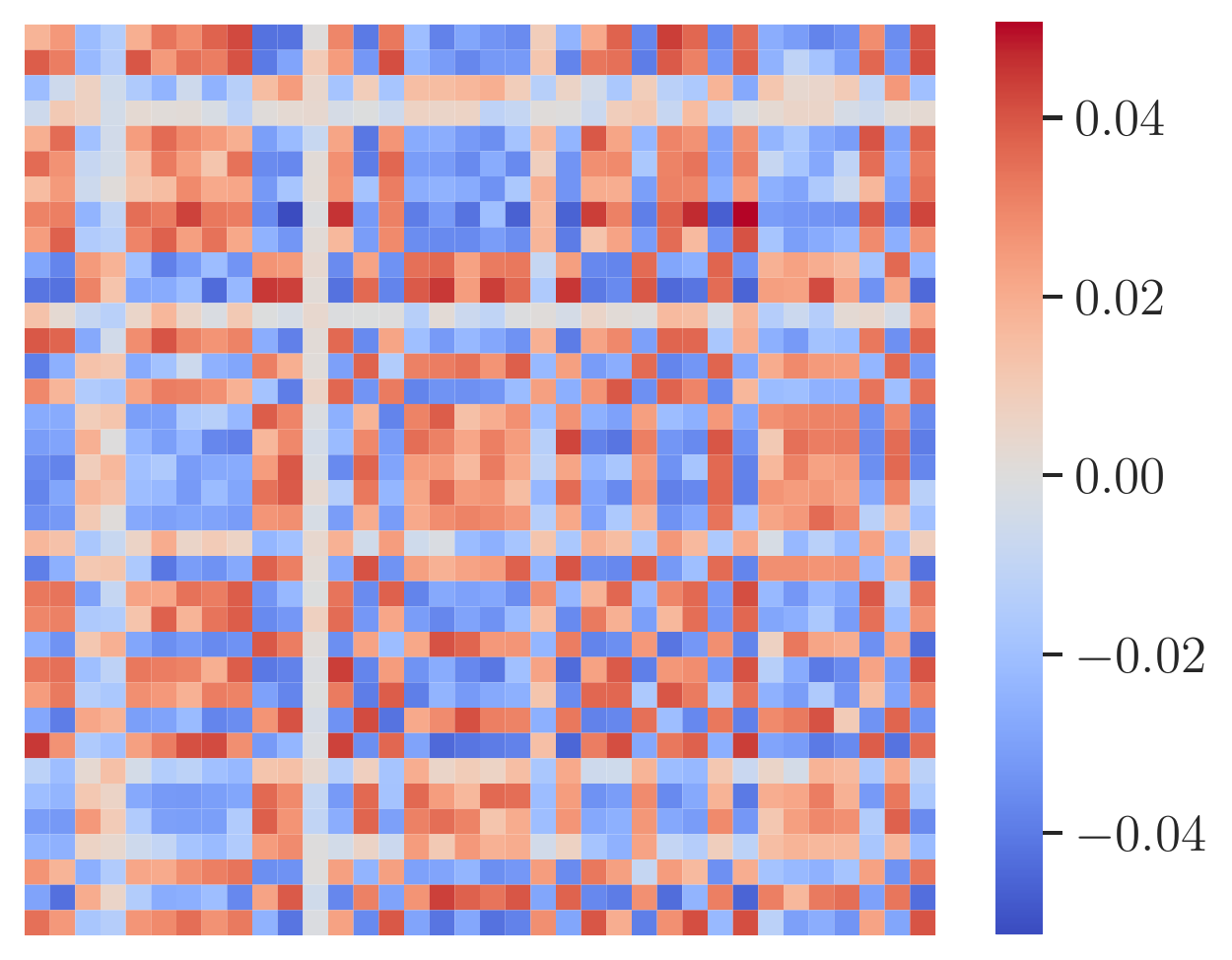}
    \caption{CIFAR}
  \end{subfigure}
  \begin{subfigure}[h]{0.24\textwidth}
    \centering
    \includegraphics[width=\textwidth]{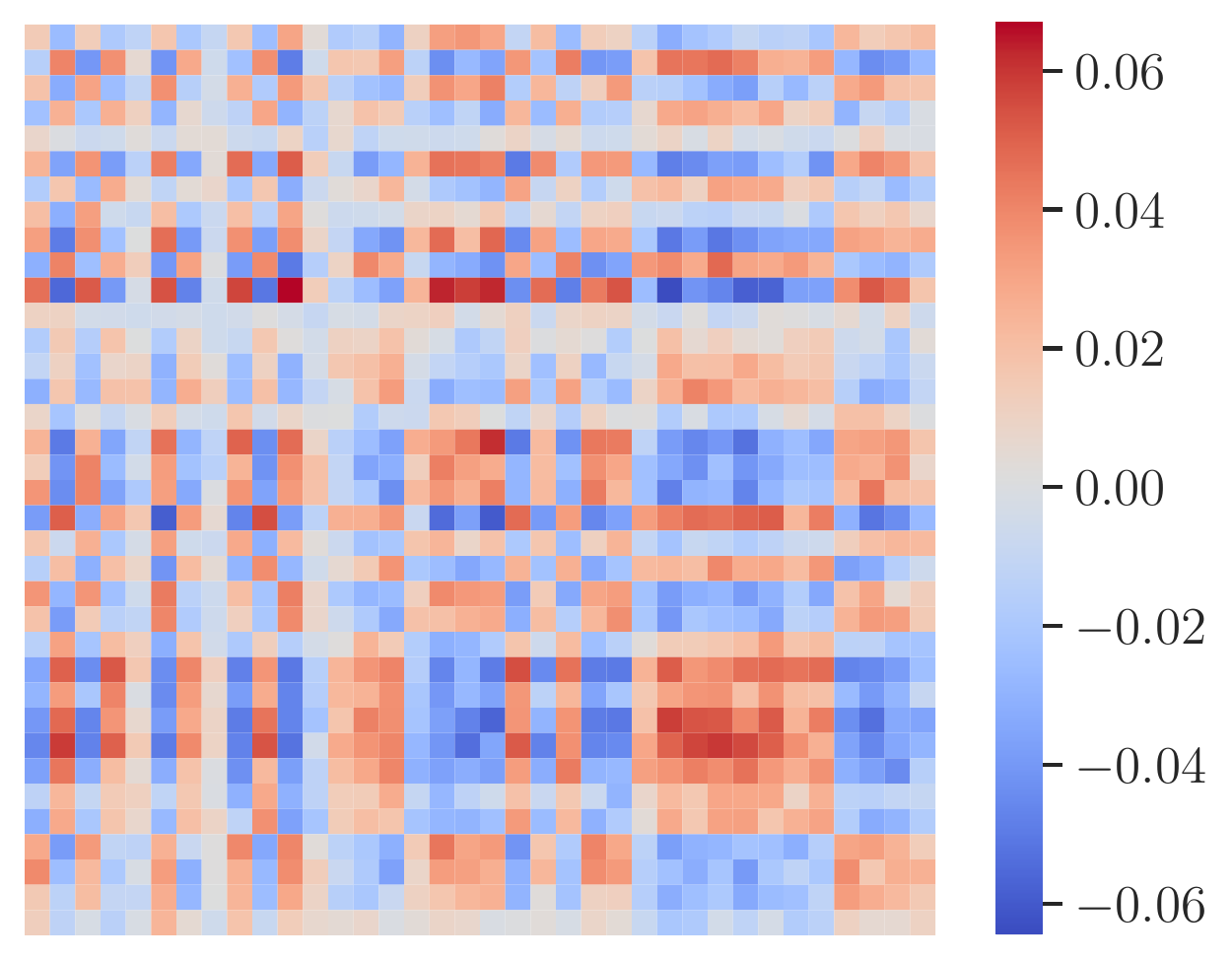}
    \caption{CIFAR}
  \end{subfigure}
  \begin{subfigure}[h]{0.24\textwidth}
    \centering
    \includegraphics[width=\textwidth]{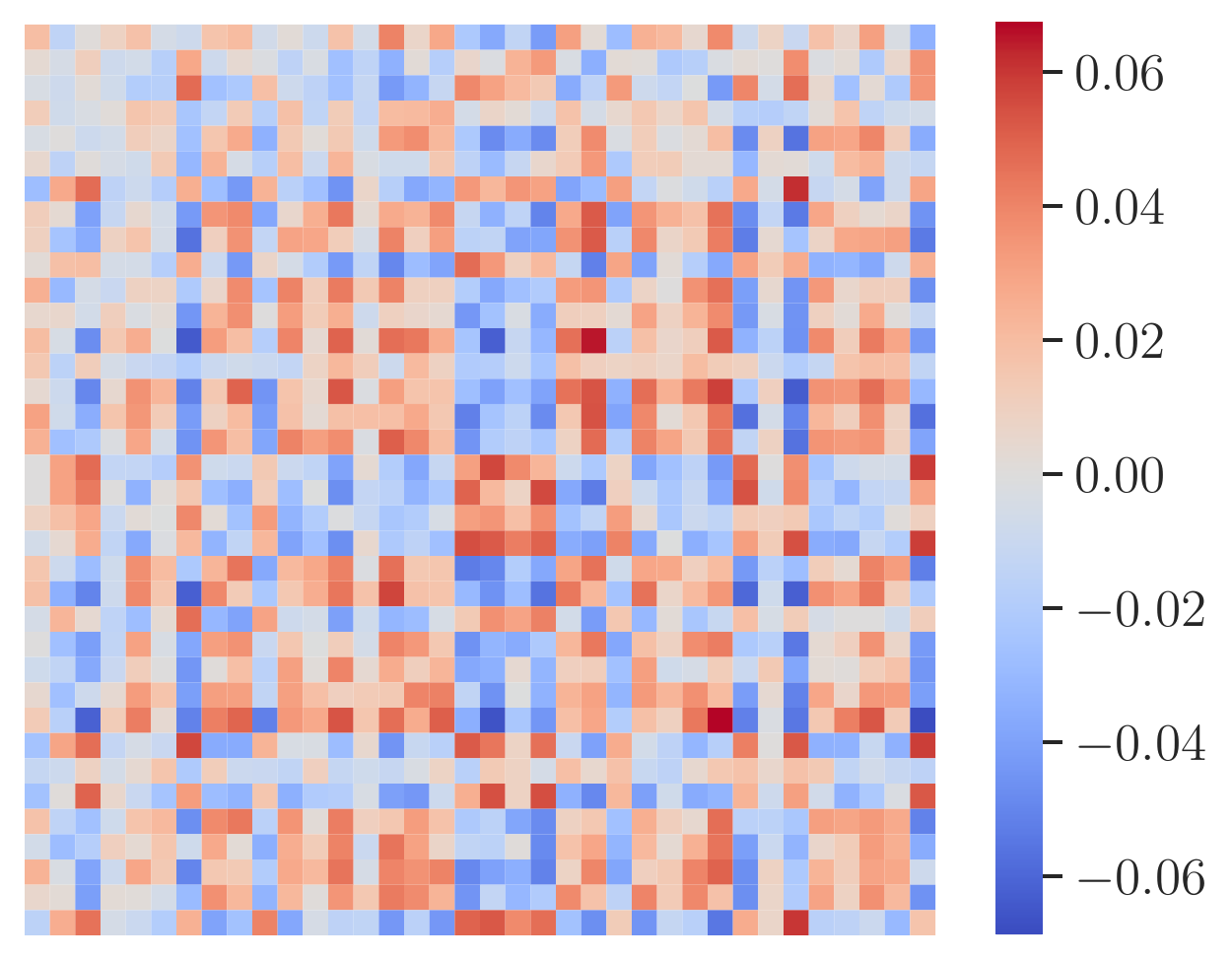}
    \caption{MNIST}
  \end{subfigure}
  \begin{subfigure}[h]{0.24\textwidth}
    \centering
    \includegraphics[width=\textwidth]{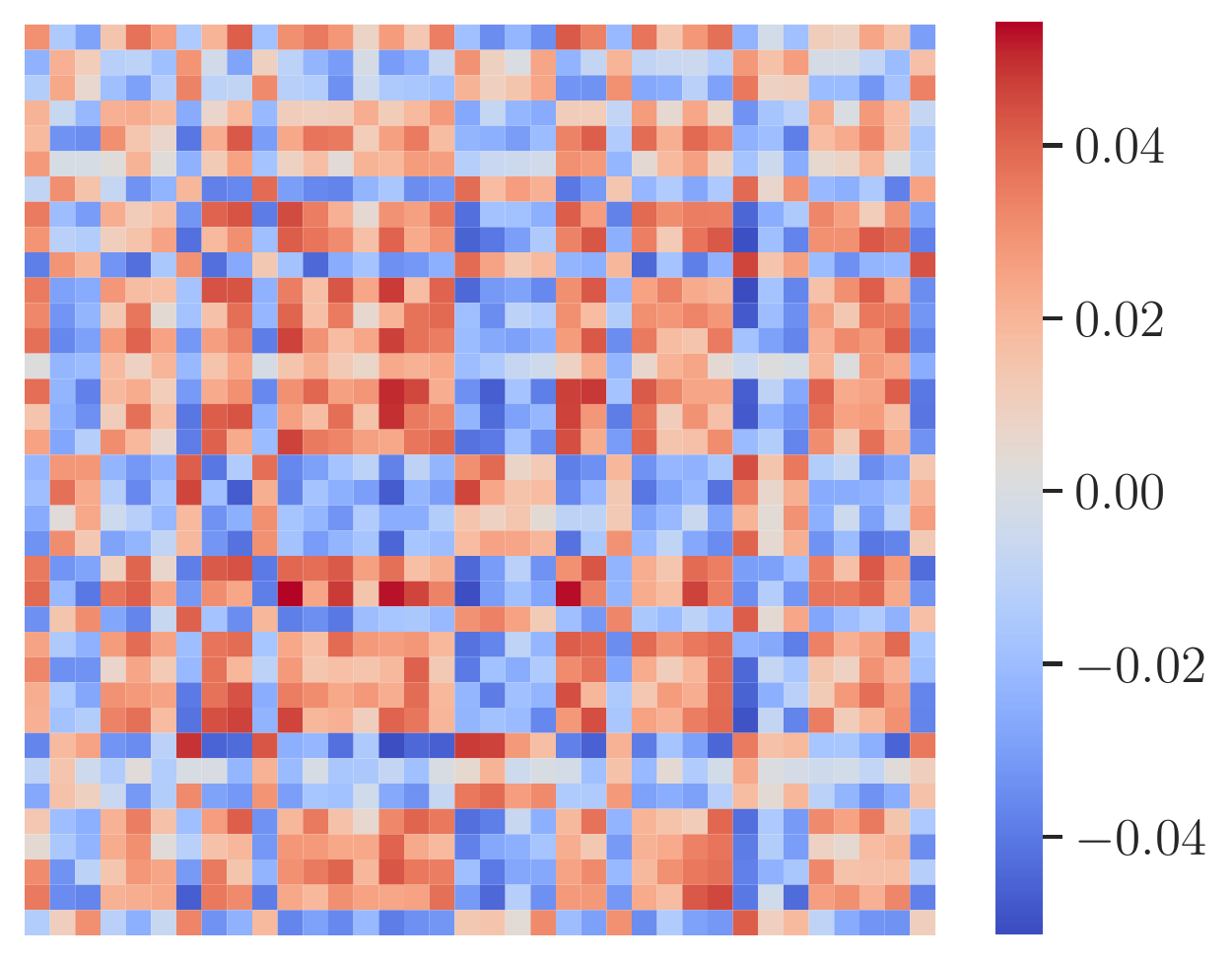}
    \caption{MNIST}
  \end{subfigure}
  \caption{Group actions learned on CIFAR and MNIST when trained for reconstruction.}
  \label{fig:group_recon}
\end{figure}

\section{Conclusion}
\label{sec:concl}
We proposed a method, Linear Group Networks (LGNs) for learning linear groups acting on the weights of neural networks. We showed that matrices in the weight space are unable to learn interesting group actions and propose a formulation where the groups act on the lifted space of the vectorized weights, providing theoretical guarantees. In our experiments, we applied our framework on datasets of natural images for different downstream tasks. We discover several interesting groups whose actions are \emph{skew-symmetric}, \emph{Toeplitz}, or \emph{multi-scale}. We drew analogues between these actions and well-known operations in machine learning, such as average pooling and planar rotations.  Our work provides a simple, minimal, and extensible framework for symmetry learning by considering linear groups in the weight space. Importantly, we addressed limitations of prior work, namely the inability of existing architectures to learn group structure from single-task data, and our framework is compatible with existing architectures and pipelines with minimal alterations.
\bibliography{refs.bib}
\newpage
\appendix

\section{Linear maps are matrices}
\label{appendix:linalg}
We present a classical result from linear algebra that states that any linear operator between vector spaces can be represented with a matrix.

\begin{proposition}
  Let $\phi: V \to W$ be a linear map between two vector spaces with $\operatorname{dim}{V} = m$ and $\operatorname{dim}(W) = n$. Then $\phi$ can be \emph{uniquely} mapped to a matrix $\bm{A}\in\mathbb{R}^{n\times m}$, up to similarity.
\end{proposition}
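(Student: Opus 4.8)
The plan is to prove this classical fact by choosing bases for $V$ and $W$, expressing $\phi$ in coordinates, and reading off the matrix column by column. First I would fix a basis $\{\bm{v}_1, \ldots, \bm{v}_m\}$ of $V$ and a basis $\{\bm{w}_1, \ldots, \bm{w}_n\}$ of $W$. Every $\bm{x} \in V$ then has a unique coordinate representation $\bm{x} = \sum_{j=1}^m c_j \bm{v}_j$, and by linearity $\phi(\bm{x}) = \sum_{j=1}^m c_j \phi(\bm{v}_j)$, so $\phi$ is completely determined by its values on the basis vectors. Next I would expand each image $\phi(\bm{v}_j)$ in the target basis, $\phi(\bm{v}_j) = \sum_{i=1}^n A_{ij} \bm{w}_i$, which defines the entries $A_{ij}$ of a matrix $\bm{A} \in \mathbb{R}^{n \times m}$. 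A short computation then shows that if $\bm{c} \in \mathbb{R}^m$ is the coordinate vector of $\bm{x}$, the coordinate vector of $\phi(\bm{x})$ in the target basis is exactly $\bm{A}\bm{c}$; this is the content of the representation claim.

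For uniqueness, I would argue that once the two bases are fixed, the matrix is forced: if $\bm{A}$ and $\bm{A}'$ both represent $\phi$, then $(\bm{A} - \bm{A}')\bm{c} = \bm{0}$ for every $\bm{c} \in \mathbb{R}^m$, and taking $\bm{c}$ to range over the standard basis vectors $\bm{e}_j$ gives that every column of $\bm{A} - \bm{A}'$ vanishes, hence $\bm{A} = \bm{A}'$. The "up to similarity" qualifier in the statement accounts for the freedom in choosing the bases: a different choice of bases replaces $\bm{A}$ by $\bm{P}\bm{A}\bm{Q}^{-1}$ for appropriate change-of-basis matrices $\bm{P}, \bm{Q}$ (in the square case $V = W$ with a single basis, this is genuine similarity $\bm{P}\bm{A}\bm{P}^{-1}$), so I would remark that the matrix is unique given the bases and unique up to this equivalence otherwise.

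There is no real obstacle here — the result is entirely standard — so the only thing to be careful about is the bookkeeping with indices when verifying that coordinates of $\phi(\bm{x})$ are $\bm{A}\bm{c}$, and being precise about what "up to similarity" means so that the statement is not literally false (a strict reading of "uniquely mapped to a matrix" without qualification would be wrong, since it depends on the bases). I would keep the index computation to one or two lines and state the change-of-basis remark without grinding through it.
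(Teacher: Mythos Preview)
Your proposal is correct and follows essentially the same route as the paper: fix bases of $V$ and $W$, read off the matrix entries from the expansions $\phi(\bm{v}_j)=\sum_i A_{ij}\bm{w}_i$, verify that this matrix realizes $\phi$ in coordinates, and then invoke change-of-basis for the ``up to similarity'' clause. If anything, your treatment is slightly more careful---you give an explicit uniqueness-with-fixed-bases argument and correctly flag that for $V\neq W$ the relation $\bm{A}\mapsto \bm{P}\bm{A}\bm{Q}^{-1}$ is matrix equivalence rather than similarity in the strict sense, a distinction the paper glosses over.
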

\begin{proof}
  Let $\{\bm{v}_1, \ldots, \bm{v}_m\}$ and $\{\bm{w}_1, \ldots, \bm{w}_n\}$ be bases of $V$ and $W$ respectively. Then the basis elements of $V$ have a representation under $\phi$ in $W$, $\phi(\bm{v}_j) = \sum_{i=1}^n a_{ij}\bm{w}_i$ for unique coefficients $a_{ij}$. From the coefficients for all the basis elements $\bm{v}_j$, construct the matric $\bm{A}\in\mathbb{R}^{n\times m}$ with $\bm{A} = [a_{ij}]$.

  Note that while $\operatorname{dim}(V) = m$ and $\operatorname{dim}(W) = n$, $V, W$ are not necessarily $\mathbb{R}^m$ and $\mathbb{R}^n$, respectively. However, they are isomorphic to those spaces via the maps $\phi_V, \phi_W$ where
  \begin{align}
    \begin{split}
      \phi_{V}: \quad& V \to \mathbb{R}^{m}\\
      & \sum_{i=1}^m \alpha_i \bm{v}_i \mapsto [\alpha_1, \ldots, \alpha_m]^T,
    \end{split}
  \end{align}
  and $\phi_W$ is defined analogously. Now consider a linear map $g: \mathbb{R}^m \to \mathbb{R}^n$ such that $\bm{x} \mapsto \bm{A}\bm{x}$, with $\bm{A}$ defined above. We will show that $g = \phi$. Indeed, note that $\phi_V(\bm{v}_j) = \bm{e}_j$, where $\bm{e}_j = [0, \ldots, 0, \underbrace{1}_{j\text{-th element}}, 0, \ldots 0]^T$ is the $j$-th basis vector of $\mathbb{R}^m$ (or $\mathbb{R}^n$ when discussing $W$), and similarly $\phi_W(\bm{w}_i) = \bm{e}_i$. Then we have
  \begin{equation*}
    g(\bm{v}_j) = \bm{A}\bm{v}_j = \bm{A}\bm{e}_j = \begin{bmatrix}
      a_{1j}\\
      \vdots\\
      a_{nj}
    \end{bmatrix} = \sum_{i=1}^n a_{ij} \bm{e}_i = \sum_{i=1}^n a_{ij} \bm{w}_i = \phi(\bm{v}_j).
  \end{equation*}
  Therefore, it follows $g = \phi$. Moreover, $\bm{A}$ is unique, up to similarity. Indeed, consider change-of-basis matrices $\bm{P}$ and $\bm{Q}$ for $V$ and $W$ respectively. Then $\phi$ would map from the transformed basis of $V$ to the transformed basis of $W$
  \begin{equation*}
    \bm{Q}\phi(\bm{P}\bm{v}_j) = \bm{Q}\bm{A}\bm{P}\bm{v}_j.
  \end{equation*}
  However, the matrix $\bm{B} = \bm{Q}\bm{A}\bm{P}$ is similar to $\bm{A}$ as $\bm{Q}$ and $\bm{P}$ are invertible by definition, and so the transformation is uniquely defined by $\bm{A}$.
\end{proof}

\section{Discussion of the invertibility loss}
\label{appendix:loss}
In \cref{subsec:invert} we suggested the introduction of auxiliary matrices $\widetilde{\bm{A}}_{(k, l)}$, to be used \emph{only} during training, to promote the invertibility of the group actions parametrized by $\bm{A}_{(k, l)}$. 
\begin{wrapfigure}{r}{0.45\textwidth}
  \centering
  \includegraphics[width=0.45\textwidth]{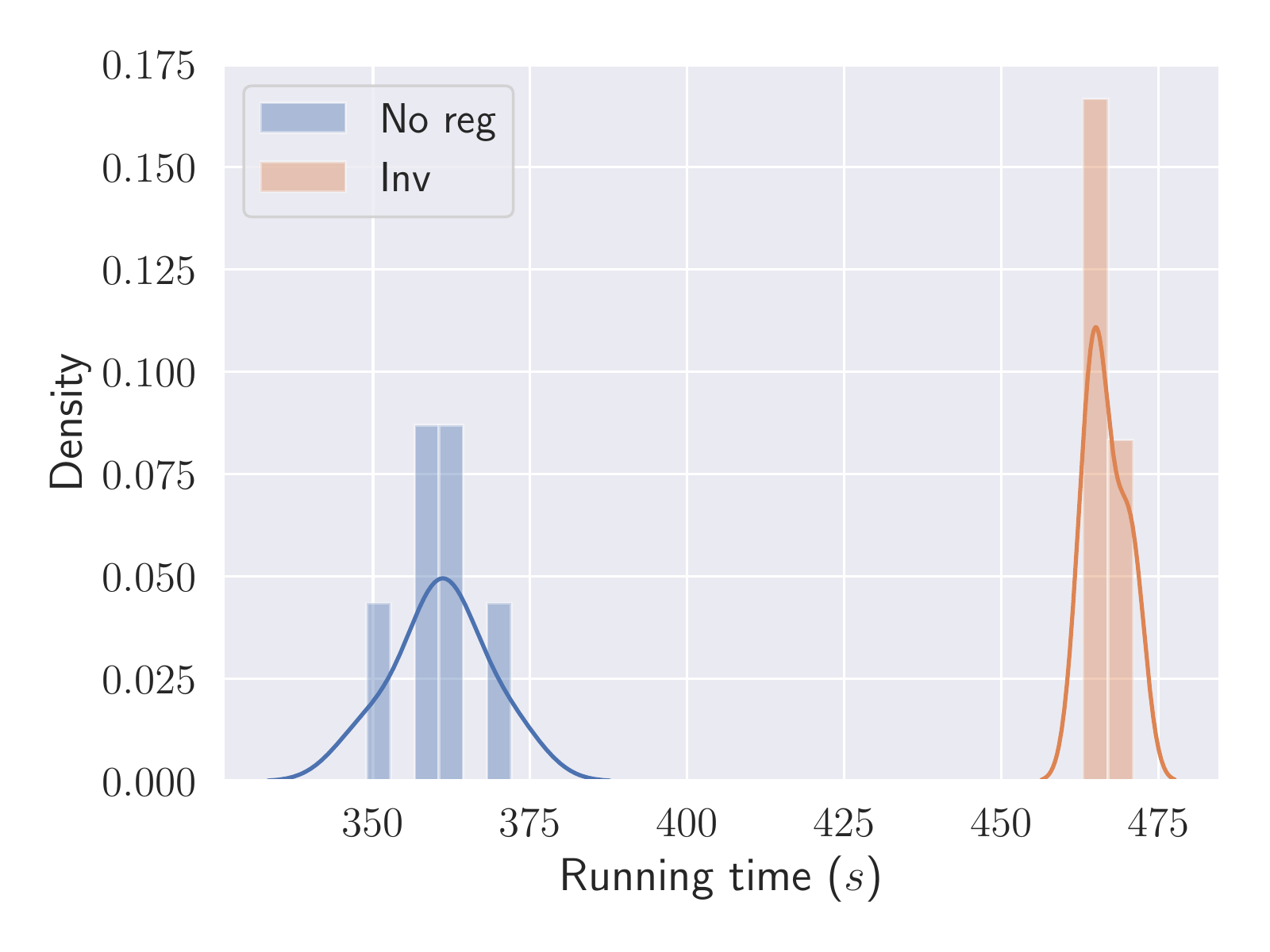}
  \caption{Effect of the regularization on runtime.}
  \label{fig:runtime}
\end{wrapfigure}
One could avoid the introduction of $\widetilde{\bm{A}}_{(k, l)}$ as a candidate for the inverse of $\bm{A}_{(k, l)}$ by considering that $\bm{A}_{(k, l)}$ is orthogonal (and thus its inverse is $\bm{A}_{(k, l)}^T$). However, that further restricts the weight matrices to belong to the Stiefel manifold $\operatorname{St}(m\cdot n, m\cdot n)$; while $\operatorname{St}(m\cdot n, m\cdot n)$ is a subgroup of $\operatorname{GL}_{m\cdot n}(\mathbb{R})$, it's a significantly more restricted set and in this work we wanted to avoid that. The introduction of $\widetilde{\bm{A}}_{(k, l)}$ results in a moderate increase in training time of about $30\%$, as can be seen in \cref{fig:runtime}, while requiring extra storage during training to store the matrices $\widetilde{\bm{A}}_{(k, l)}$.

Another way to promote invertibility without the need for the extra training time matrices $\widetilde{\bm{A}}_{(k, l)}$ is by penalizing the singular values of $\bm{A}_{(k, l)}$ to be away from zero. To this end, we could consider the loss
\begin{equation}
  \label{eq:svd}
  L = -\mu \sum_{i, k, l}\sigma_i(\bm{A}_{(k, l)}),
\end{equation}
where $\sigma_i(\bm{A}_{(k, l)})$ is the $i$-th singular value of $\bm{A}_{(k, l)}$ and $\mu$ is a regularization parameter. Alternative forms of \eqref{eq:svd} can be considered. For $\bm{A}_{(k, l)}$ to be invertible \emph{all} singular values need to be bounded away from zero; to that end, a loss of the form $-\log \prod_{i,k,l}\sigma_i(\bm{A}_{(k, l)})$ might be more apropriate, as it heavily penalizes all singular values to be bound away from zero. In our experiments, we found that the formulation of \eqref{eq:svd} was adequate to bound the singular values away from zero. Conventional wisdom is that SVD is costly and unstable when used for deep learning; when used in practice for our application we observed numerical stability (via the use of \texttt{svdvals}) and moderate increases of about $2\times$ in training time (by moving the SVD-related computations to the CPU\footnote{As suggested by \href{https://github.com/KingJamesSong/DifferentiableSVD/tree/main}{https://github.com/KingJamesSong/DifferentiableSVD/tree/main}.}). Due to the increased training time, we opted for the formulation of \cref{subsec:invert} for the bulk of our experiments.

\section{Architecture, datasets, and hyperparameters}
\label{appendix:architecture}
\subsection*{MNIST}
The \texttt{MNIST}\footnote{\href{http://yann.lecun.com/exdb/mnist/}{http://yann.lecun.com/exdb/mnist/}.} dataset consists of $28\times 28$ black and white images of handwritten digits with slight variations in orientation and writing style. The images are size-normalized and have been centered. There are $60,000$ images for training and $10,000$ images for testing.

\subsection*{CIFAR10}
The \texttt{CIFAR10}\footnote{\href{https://www.cs.toronto.edu/~kriz/cifar.html}{https://www.cs.toronto.edu/~kriz/cifar.html}.} dataset consists of $32\times 32$ color images in $10$ classes: airplane, automobile, bird, cat, deer, dog, frog, horse, ship, and truck. There are $50,000$ images for training and $10,000$ images for testing.

\subsection*{Architecture}
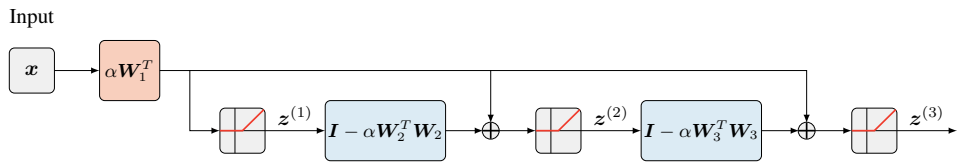
\begin{figure}[h!]
  \centering
   \scalebox{0.8}{\begin{tikzpicture}
          \draw[fill=light_gray, rounded corners=2.5pt] (-1.5, 0.125) rectangle (-0.75, 0.875);
          \node at (-1.125, 0.5) {$\bm{x}$};
          \node at (-1.125, 1.375) {Input};
          \draw[-latex] (-0.75, 0.5) -- (0, 0.5);

          \draw[fill=neur_green, rounded corners=2.5pt] (0, 0) rectangle (1, 1);
          \node at (0.5, 0.5) {{\footnotesize $\alpha\bm{W}_1^T$}};
          \draw[-latex] (1, 0.5) -- (1.5, 0.5) -- (1.5, -0.5) -- (2, -0.5);
    \draw[-latex] (1.5, 0.5) -- (6.5, 0.5) -- (6.5, -0.375);
    \draw[-latex] (6.5, 0.5) -- (11.75, 0.5) -- (11.75, -0.375);

    \draw[fill=light_gray, rounded corners=2.5pt] (2, -0.875) rectangle (2.75, -0.125);
    \draw[line width=0.01mm] (2, -0.5) -- (2.75, -0.5);
    \draw[line width=0.01mm] (2.25, -0.875) -- (2.25, -0.125);
    \draw[outt_purpl, thick] (2, -0.5) -- (2.375, -0.5) -- (2.73, -0.15);
          \node at (3.25, -0.25) {$\bm{z}^{(1)}$};
          \draw[-latex] (2.75, -0.5) -- (3.75, -0.5);

          \draw[fill=neur_purpl, rounded corners=2.5pt] (3.75, -1) rectangle (5.75, 0);
          \node at (4.75, -0.5) {{\footnotesize $\bm{I} - \alpha\bm{W}_2^T\bm{W}_2$}};
    \draw[-latex] (5.75, -0.5) -- (6.375, -0.5);
          \draw[fill=light_gray] (6.5, -0.5) circle (4pt);
    \draw[thick] (6.5, -0.365) -- (6.5, -0.635);
    \draw[thick] (6.36, -0.5) -- (6.64, -0.5);

    \draw[-latex] (6.625, -0.5) -- (7.25, -0.5);
    \draw[fill=light_gray, rounded corners=2.5pt] (7.25, -0.875) rectangle (8, -0.125);
    \draw[line width=0.01mm] (7.25, -0.5) -- (8, -0.5);
    \draw[line width=0.01mm] (7.5, -0.875) -- (7.5, -0.125);
    \draw[outt_purpl, thick] (7.25, -0.5) -- (7.625, -0.5) -- (7.98, -0.15);
    \node at (8.5, -0.25) {$\bm{z}^{(2)}$};
    \draw[-latex] (8, -0.5) -- (9, -0.5);

    \draw[fill=neur_purpl, rounded corners=2.5pt] (9, -1) rectangle (11, 0);
          \node at (10, -0.5) {{\footnotesize $\bm{I} - \alpha\bm{W}_3^T\bm{W}_3$}};
    \draw[-latex] (11, -0.5) -- (11.625, -0.5);
          \draw[fill=light_gray] (11.75, -0.5) circle (4pt);
    \draw[thick] (11.75, -0.365) -- (11.75, -0.635);
    \draw[thick] (11.61, -0.5) -- (11.89, -0.5);

    \draw[-latex] (11.875, -0.5) -- (12.5, -0.5);
    \draw[fill=light_gray, rounded corners=2.5pt] (12.5, -0.875) rectangle (13.25, -0.125);
    \draw[line width=0.01mm] (12.5, -0.5) -- (13.25, -0.5);
    \draw[line width=0.01mm] (12.75, -0.875) -- (12.75, -0.125);
    \draw[outt_purpl, thick] (12.5, -0.5) -- (12.875, -0.5) -- (13.23, -0.15);
    \node at (13.75, -0.25) {$\bm{z}^{(3)}$};
    \draw[-latex] (13.25, -0.5) -- (14.25, -0.5);
      \end{tikzpicture}}
  \caption{The unfolded architecture of \eqref{eq:ista} for $L = 3$.}
  \label{fig:neural}
\end{figure}
\cref{fig:neural} demonstrates the architecture implied by \eqref{eq:ista}. The matrices $\bm{W}_i$ are those defined in \eqref{eq:layer_weights}. In our case where models are mainly convolutional, the matrix products are replaced with correlations and convolutions.

\subsection*{Training and hyperparameters}
We use the PyTorch framework for all our experiments were run on a NVIDIA GeForce RTX 3090 Ti. We use the Adam optimizer with a learning rate of $0.01$, and half that learning rate when training is $50\%$, $75\%$, and $87.5\%$ complete. We train all our models for $100$ epochs.

As stated in the main text of our paper, we consider filters of size $6\times 6$, have $L=4$ layers in our architectures, and each layer has $K=5$ groups of $p=4$ elements each. When using the regularization of \cref{subsec:invert}, a value of $\mu = 0.001$ was used when $\widetilde{\bm{A}}_{(k, l)}$ was used and a value of $\mu = 0.01$ when the regularization used \texttt{svdvals}.

We used a trainable bias $\lambda$ in \eqref{eq:soft_thresh} for each filter and batch normalization was used after every layer (except for the layer before the classifier), following best practices. The step size $\alpha$ for ISTA in \eqref{eq:ista} was chosen to be $0.01$ and \emph{average pooling} was used right before the classification network to reduce the computational complexity.

\section{Diagonalization of circulant matrices}
\label{appendix:fourier}
Circulant matrices are a special case of Toeplitz matrices, where the upper and lower minor diagonals are enforced to be related and follow a ``circular'' structure. In contrast, the upper and lower minor diagonals in Toeplitz matrices are generally independent. Circulant matrices are of importance as they are diagonalized by the DFT matrix, which significantly speeds up computations. in other words if a matrix $\bm{C}$ is circulant, then
\begin{equation*}
  \bm{F}_n \bm{C} \bm{F}_n^{-1} = \bm{D},
\end{equation*}
where $\bm{D}$ indicates a diagonal matrix and $\bm{F}_n$ is the $n\times n$ DFT matrix. Toeplitz matrices are only asymptotically circulant if one considers infinite repetitions of the Toeplitz matrix.
\begin{wrapfigure}{r}{0.6\textwidth}
  \centering
  \begin{subfigure}[h]{0.25\textwidth}
    \centering
    \includegraphics[width=\textwidth]{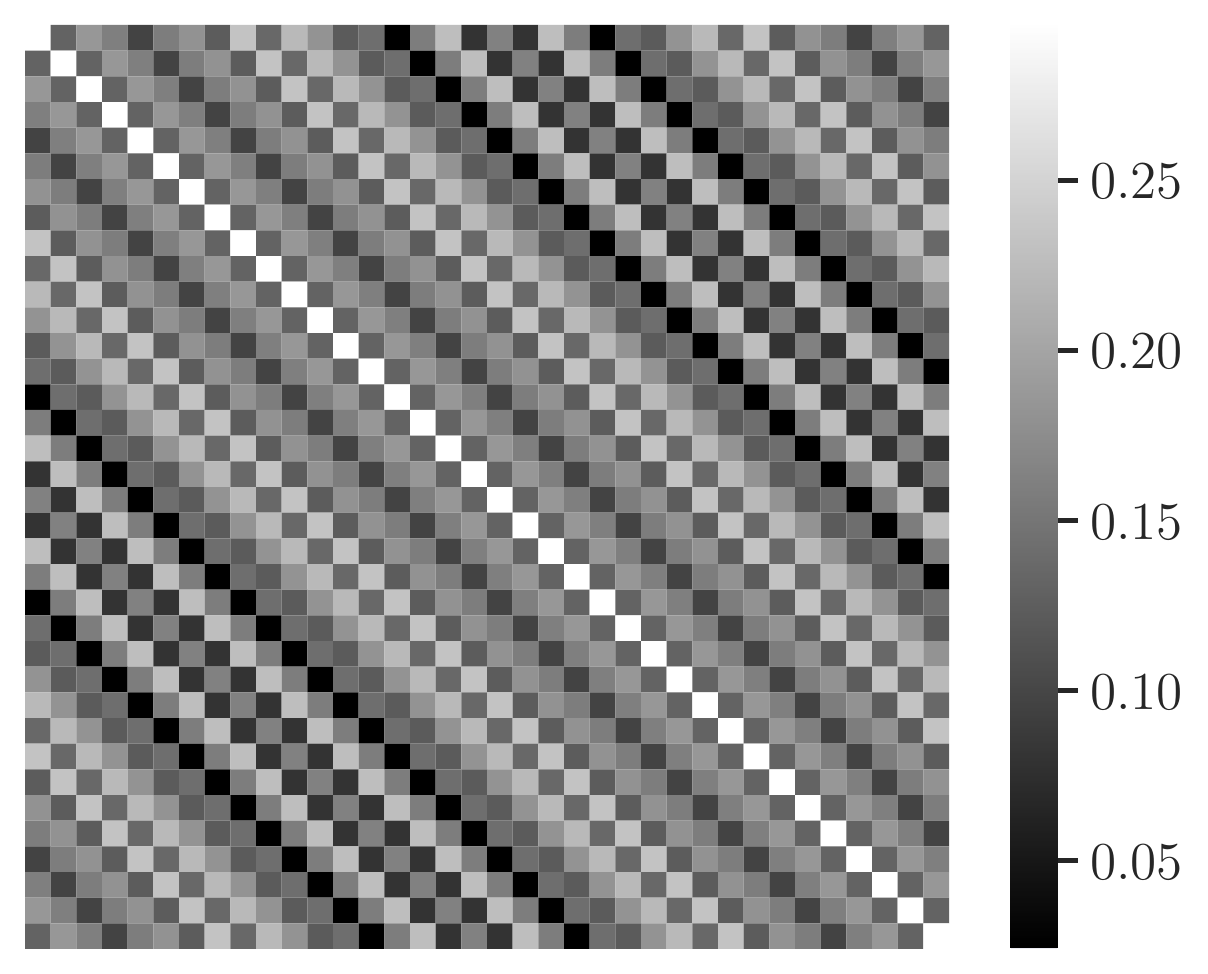}
    \caption{Circulant}
  \end{subfigure}
  \begin{subfigure}[h]{0.25\textwidth}
    \centering
    \includegraphics[width=\textwidth]{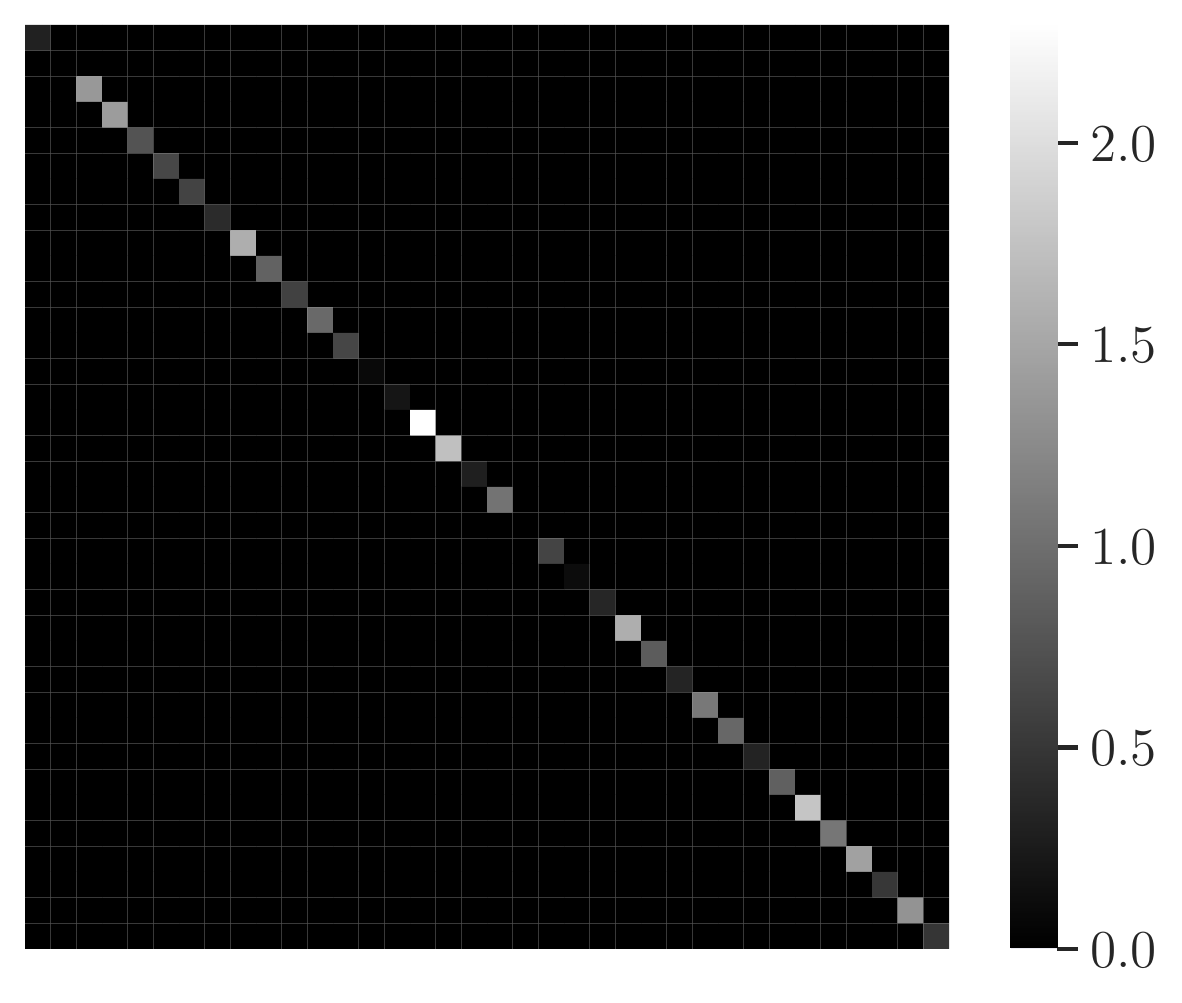}
    \caption{Diagonal}
    \label{fig:diagonal}
  \end{subfigure}
  \caption{Circulant matrices are diagonalized by $\bm{F}_n$.}
  \label{fig:circ_diag}
\end{wrapfigure}
In this appendix we examine the extent to which the matrices of \cref{fig:group_actions} are actually Toeplitz, i.e., they are diagonalized by the matrix of the Discrete Fourier Transform. As a baseline, in \cref{fig:circ_diag}, we consider a random diagonal matrix and generate the corresponding circulat matrix as $\bm{F}_n^{-1} \bm{D} \bm{F}_n$. We clearly observe that the structure is circulant.

We next turn our attention to the group action of \cref{fig:toeplitz}. We attempt to diagonalize this action in \cref{fig:diag_act}, where we also include the attempt to diagonalize a random matrix with Gaussian entries.
\begin{figure}[h]
  \begin{subfigure}[h]{0.24\textwidth}
    \centering
    \includegraphics[width=\textwidth]{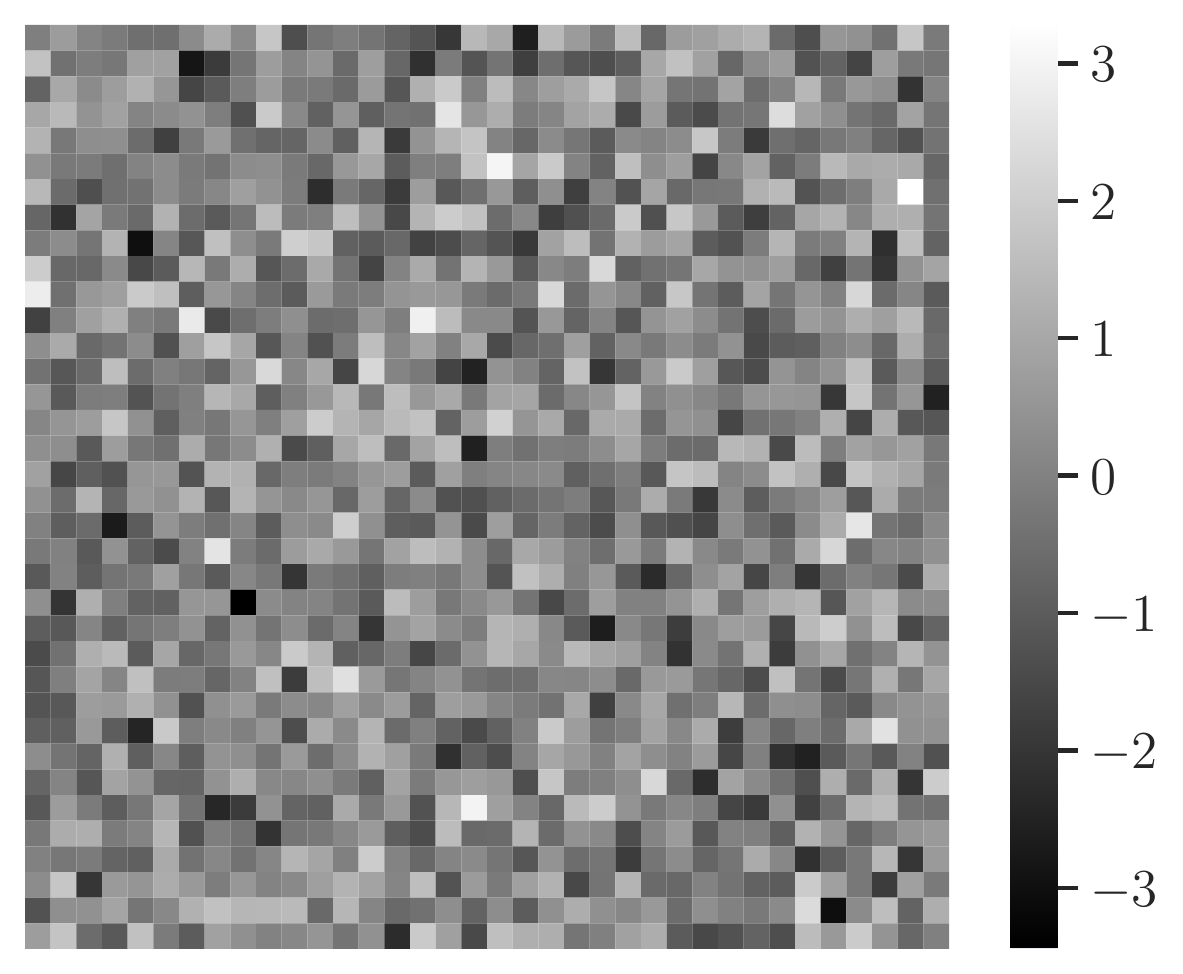}
    \caption{Random}
  \end{subfigure}
  \begin{subfigure}[h]{0.24\textwidth}
    \centering
    \includegraphics[width=\textwidth]{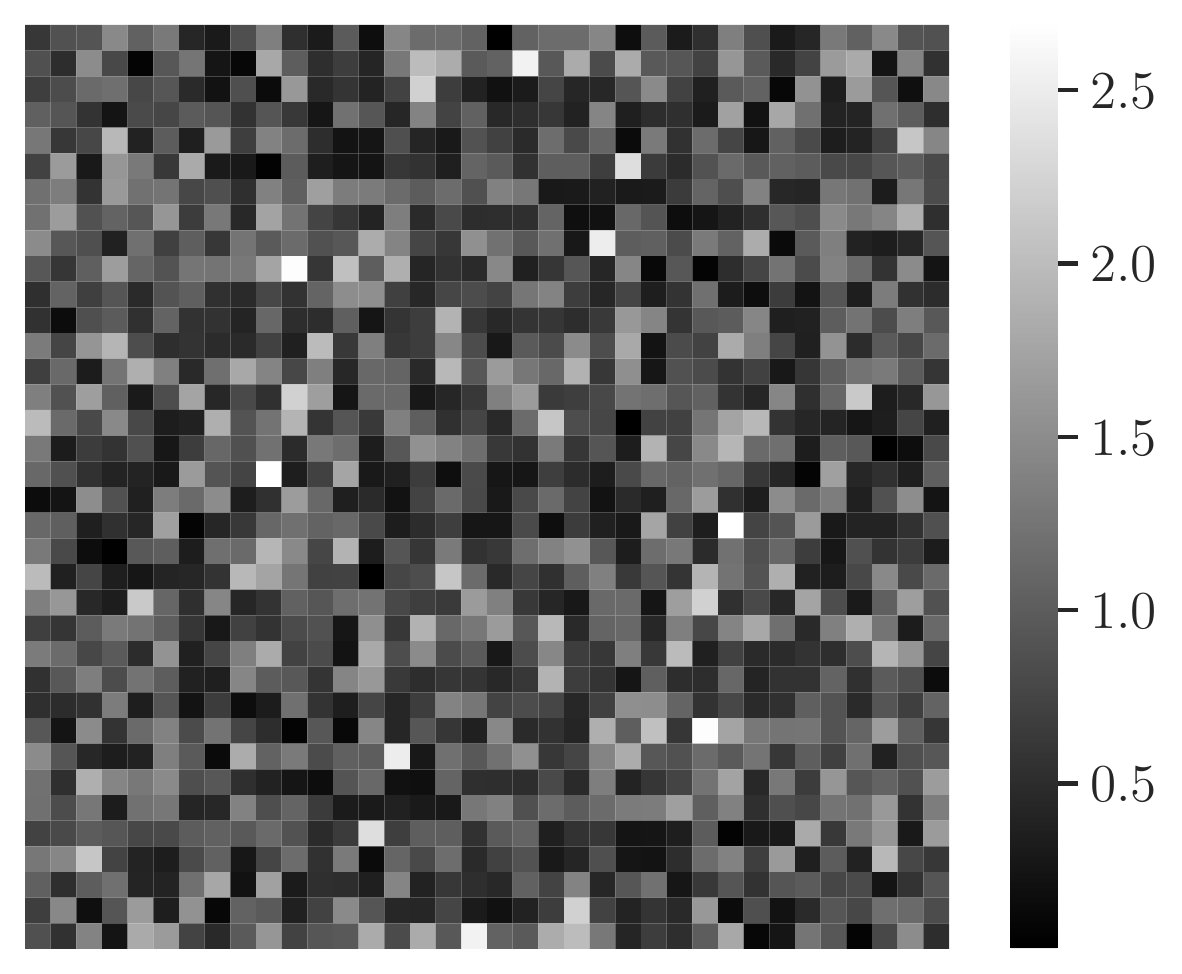}
    \caption{Diagonalization}
    \label{fig:rand_diag}
  \end{subfigure}
  \begin{subfigure}[h]{0.24\textwidth}
    \centering
    \includegraphics[width=\textwidth]{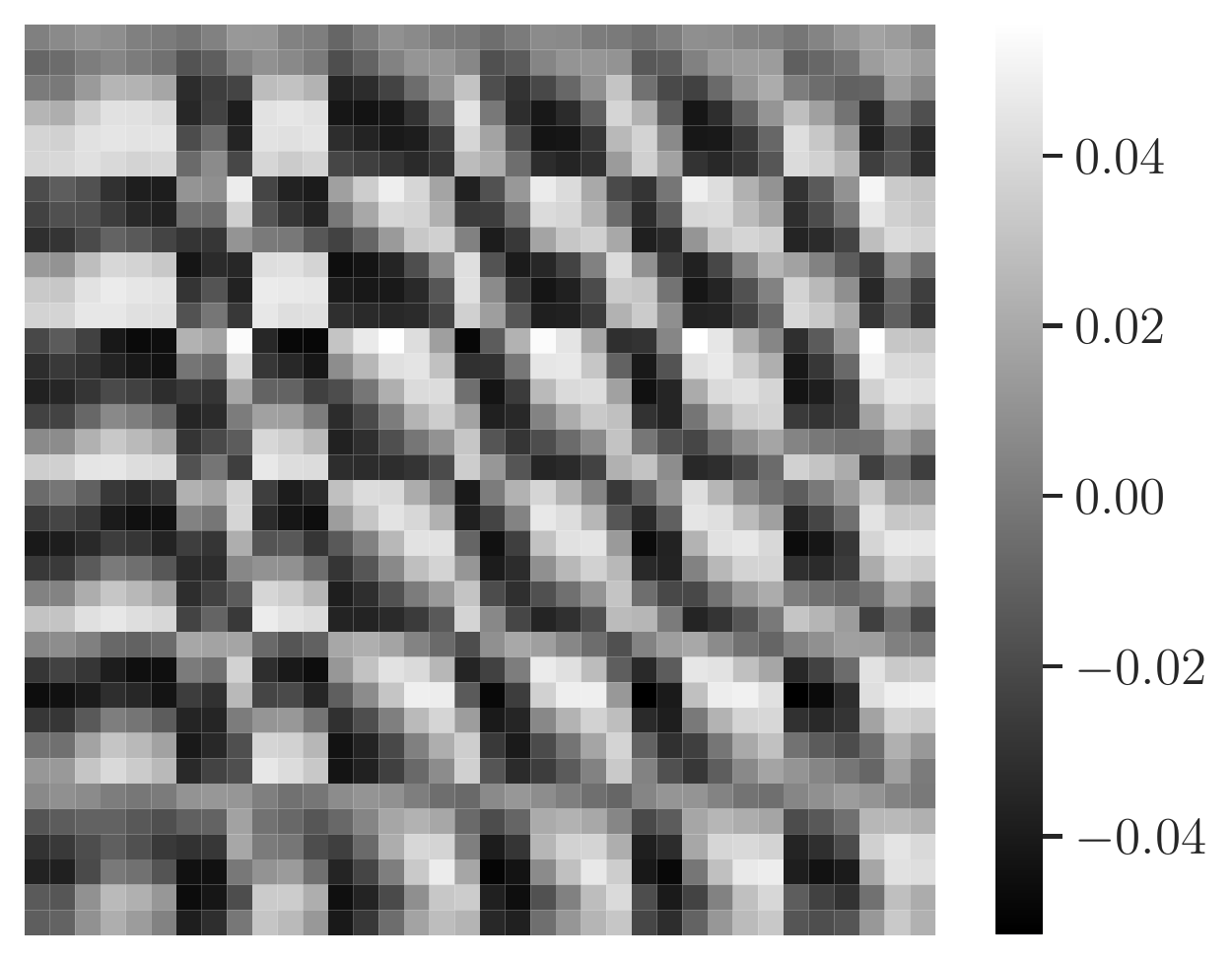}
    \caption{Group action}
  \end{subfigure}
  \begin{subfigure}[h]{0.24\textwidth}
    \centering
    \includegraphics[width=\textwidth]{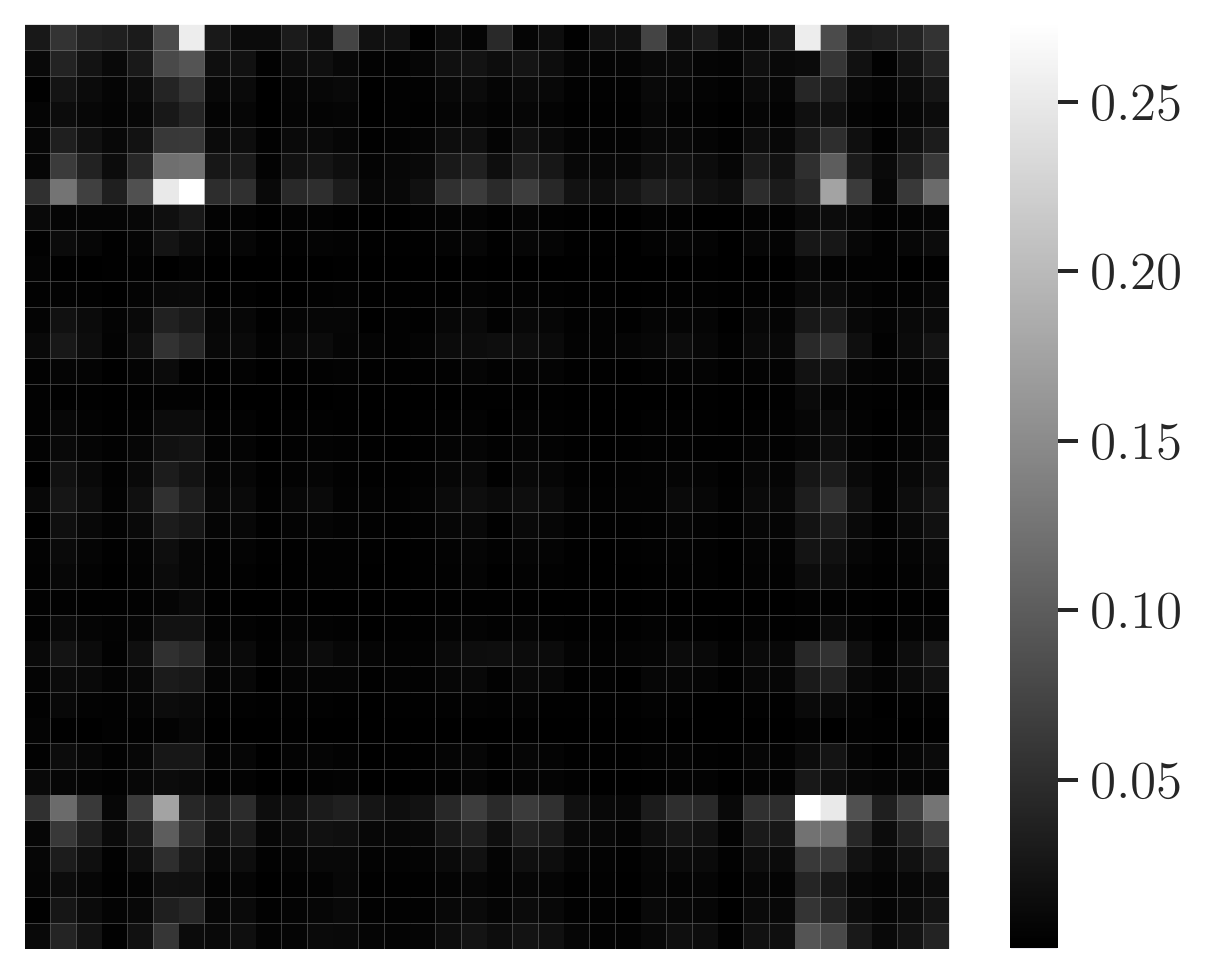}
    \caption{Diagonalization}
    \label{fig:diagonalizable_action}
  \end{subfigure}
  \caption{Attempt to diagonalize a random matrix and the group action of \cref{fig:toeplitz}.}
  \label{fig:diag_act}
\end{figure}
We observe that, while the structure $\bm{F}_n \bm{A}_{(k, l)} \bm{F}_n^{-1}$ is not strictly diagonal, it has very few non zero values and they are concentrated in a few rows and columns. As hinted at the beginning of the section, Toeplitz matrices are only asymptotically diagonalizable by the DFT matrix, so this finding is not contradictory with our claims. We contrast the diagonal structure of $\bm{A}_{(k, l)}$ with that of a random matrix: we observe that an attempt to diagonalize random matrices fails and the result is highly random itself. The degree to which a real, learnable matrix is diagonalizable by $\bm{F}_n$ is a spectrum: on the one end of the spectrum we have \cref{fig:rand_diag}, which is not diagonalized at all by $\bm{F}_n$, and on the other end \cref{fig:diagonal}, which is completely diagonalized. \cref{fig:diagonalizable_action} lies in between these extremes, and we argue it is closer to being circulant than it is to being random.

\end{document}